\documentclass[sigconf]{acmart}
\AtBeginDocument{%
  }

\usepackage{amsmath,amsfonts}
\usepackage{marginnote}

\usepackage[flushleft]{threeparttable}
\usepackage{graphicx}
\usepackage{textcomp}
\usepackage{xcolor}
\usepackage{url}
\usepackage{enumitem}
\usepackage[lined,linesnumbered,boxed,commentsnumbered, ruled,vlined]{algorithm2e}
\usepackage{soul}
\usepackage{svg}

\usepackage{booktabs}
\usepackage{etoolbox}
\makeatletter
\patchcmd\algocf@Vline{\vrule}{\vrule \kern-0.4pt}{}{}
\patchcmd\algocf@Vsline{\vrule}{\vrule \kern-0.4pt}{}{}
\makeatother
\usepackage{gensymb}
\usepackage{subfigure}
\usepackage{graphicx,epstopdf,url,color}
\usepackage{balance}
\usepackage{placeins}
\usepackage{amsthm}
\usepackage{mathrsfs}
\usepackage{algpseudocode}
\usepackage{hyperref}
 \usepackage{multirow}
\usepackage{makecell}
\usepackage[normalem]{ulem}

\usepackage{xcolor}
\let\svthefootnote\thefootnote
\newcommand\freefootnote[1]{%
  \let\thefootnote\relax%
  \footnotetext{#1}%
  \let\thefootnote\svthefootnote%
}

\newtheorem{theorem}{\textbf{Theorem}}

\newtheorem{definition}{\textbf{Definition}}

\newtheorem{lemma}{\textbf{Lemma}}

\newcommand{\nonl}{\renewcommand{\nl}{\let\nl\oldnl}}
\usepackage{xspace} 

\newcommand{\ie}{\emph{i.e.,}\xspace}
\newcommand{\eg}{\emph{e.g.,}\xspace}

\setcopyright{none}
\renewcommand\footnotetextcopyrightpermission[1]{}
\begin{document}

\title{HiRAD: A Flexible Large-Scale AGV Routing System}

\author{Yunjie Huang}
\email{yhuang863@connect.hkust-gz.edu.cn}
\author{Ruizhong Wu}
\author{Mengxuan Zhang}
\author{Frodo Kin Sun Chan}
\author{Yan Nei Law}
\author{Lei Li}

\renewcommand{\shortauthors}{Huang et al.}

\begin{abstract}
  Automatic Guided Vehicles (AGVs)substantially boost warehouse throughput, but routing large-scale AGV fleets remains challenging. Classical Multi-Agent Pathfinding solvers suffer from exploding combinatorial complexity and super-quadratic runtime, while relying on idealized grid or piecewise-linear motion models that mismatch real-world kinematics. Recent Reinforcement Learning (RL) solutions improve flexibility via decentralized agent policies but depend on discretized spatiotemporal representations, require millions of episodes to converge, and incur full-map observation at every step, which leads to large models, slow convergence, and high inference latency that violates real-time industrial control constraints. To address these bottlenecks, we propose HiRAD, a hierarchical RL framework for continuous-space AGV routing with real-time guarantees, with 1) a step-level spatiotemporal representation that translates continuous motion into a differentiable RL problem, 2) a hierarchical strategy that splits heading choice from velocity control to reduce the action space, and 3) an asynchronous event-driven decision pipeline further lowers inference complexity from $O(n^2)$ to $O(n)$ and cuts per-step latency by as much as 71 \%. Across random graphs and two warehouse maps, HiRAD reduces makespan by 45 \%–63 \% and shortens end-to-end runtime. 
\end{abstract}

\begin{CCSXML}
<ccs2012>
   <concept>
       <concept_id>10010147.10010178.10010199.10010202</concept_id>
       <concept_desc>Computing methodologies~Multi-agent planning</concept_desc>
       <concept_significance>500</concept_significance>
       </concept>
       <concept_id>10010147.10010178.10010199.10010204</concept_id>
       <concept_desc>Computing methodologies~Robotic planning</concept_desc>
       <concept_significance>500</concept_significance>
       </concept>
 </ccs2012>
\end{CCSXML}

\ccsdesc[500]{Computing methodologies~Multi-agent planning}


\keywords{Multi-agent, Pathfinding, Reinforcement learning, AGV, Routing}


\maketitle

\section{Introduction}
\label{sec:Introduction}
The rapid growth of e-commerce and AI-driven logistics has propelled Automated Guided Vehicles (AGV) to a critical role in warehouse operations, which transports items from storage to packaging zones to meet large-scale order demands \cite{wurman2008coordinating,bogue2016growth,dhaliwal2020rise}. A large AGV fleet can significantly cut labor costs \cite{custodio2020flexible}, especially during peak shopping events when billions of orders are processed daily.

\begin{figure}
    \centering
    \includegraphics[width=1.0\linewidth]{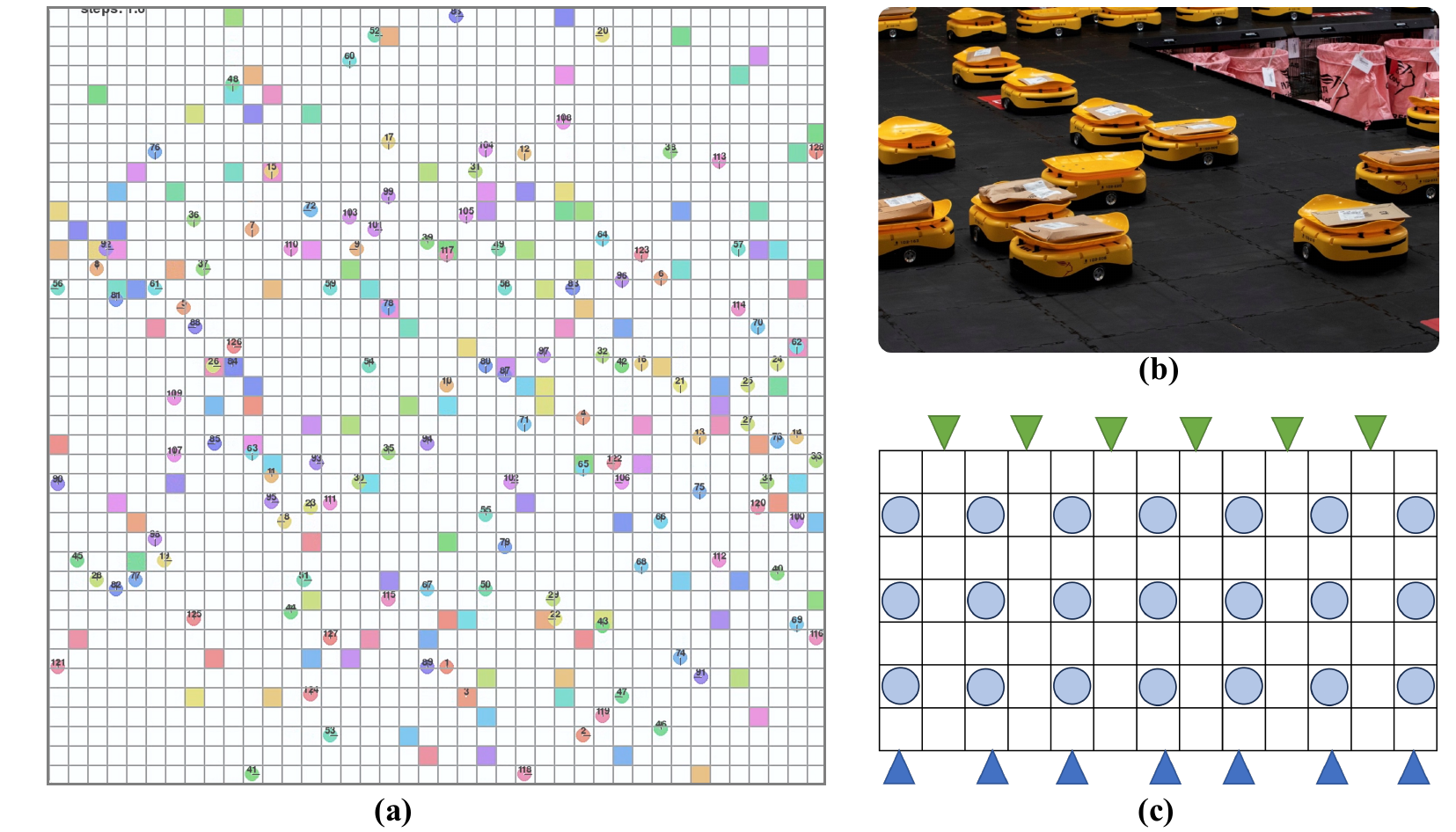}
    \vspace*{-4mm}
    \caption{(a) Snapshot of an AGV Operating Scene ; (b) Physical AGVs; (c) Warehouse Map (Green Triangles: Pickup locations, Blue Triangles: Dropoff locations).}
    \vspace*{-4mm}
    \label{fig:examples}
\end{figure}

Despite widespread deployment, AGV systems face fundamental limitations \cite{wu2025continuous,wu2025lifelong, boyarski2015icbs,li2021eecbs}, which must meet four base requirements: 1) \textit{Collision-free} operation as the safety baseline; 2) \textit{Flexibility} to handle ad-hoc failures and dynamic adjustments to tasks, layouts, or vehicles; 3) \textit{Kinematic Constraints} to adhere to physical movement laws in continuous environments; and 4) \textit{Efficiency} to complete tasks as quickly as possible. However, achieving both \textit{flexibility} and \textit{kinematic constraints} simultaneously remains a challenge.

To improve flexibility, routing algorithms must adapt quickly to environmental changes, but kinematic constraints complicate computations, often leading to problem simplifications in discrete environments: space is divided into grids, and AGVs move instantly between grids at discrete time intervals, sacrificing realism. This can even violate \textit{collision-free} guarantees; for instance, Figure \ref{fig:moving} illustrates a collision scenario caused by discrete assumptions that fail to capture real-world motion.

Conceptually, AGV routing is an NP-Hard \textit{Multi-Agent Path Finding (MAPF)} problem \cite{yu2013structure}. Early robotic solutions planned paths for all AGVs in advance to avoid collisions \cite{boyarski2015icbs,li2021eecbs}, but they were computationally slow and impractical for scaling. Faster pruning-based algorithms \cite{shicollision,shi2022adaptive}, like OHSMD \cite{wu2025continuous}, introduced continuous kinematic constraints but lacked \textit{flexibility}, as deviations in AGV paths required system-wide re-planning. To address \textit{flexibility}, reinforcement learning (RL) trains AGVs to independently find paths \cite{damani2021primal,maoudj2022decentralized,liu2020prediction}. However, RL methods scale poorly, requiring days to train for large fleets and sacrificing \textit{kinematic constraints}. PRIMAL \cite{primal} is the only RL-based solution to support over 1,000 AGVs, but it operates in discrete environments and treats collisions as penalties rather than strict constraints, limiting its practicality.

In summary, achieving both \textit{flexibility} and \textit{kinematic constraints} remains a key challenge for improving AGV efficiency. Since full-path planning conflicts with \textit{flexibility}, we adopt the RL-based paradigm for adaptability, extend it to satisfy kinematic constraints efficiently using data management techniques, and propose the \textbf{\uline{Hi}gh-\uline{R}esolution and \uline{Hi}erarchical \uline{R}outing with \uline{A}synchronous \uline{D}ecision Framework (HiRAD)} with the following componentes.

\begin{figure}[t]
    \centering
    \includegraphics[width=0.7\linewidth]{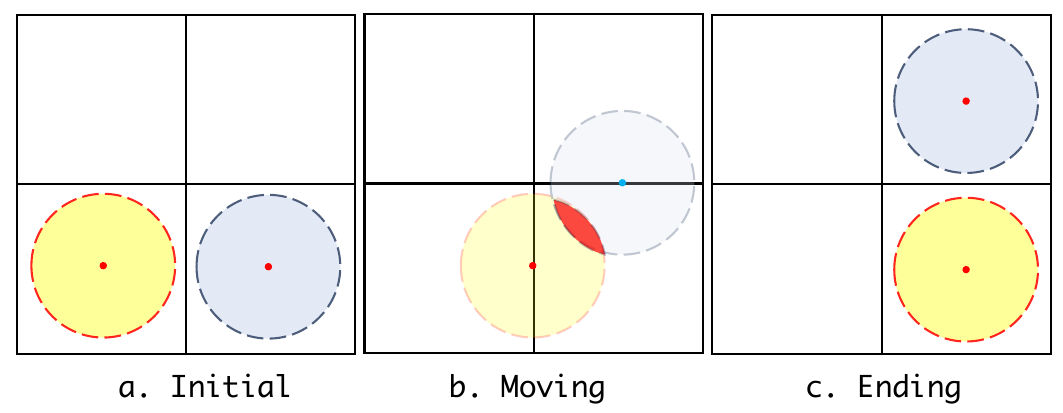}
    \caption{AGV Conflicts in Discrete and Continuous Moving}    
    \vspace*{-3mm}
    \label{fig:moving}
\end{figure}

Firstly, RL's step-based decision-making conflicts with continuous environments. To address this, we propose \textit{High-Resolution Spatiotemporal Modeling}, which uses finer-grained discrete space and time aligned with AGV kinematics, effectively simulating a continuous environment. Grids are divided into smaller cells, time into finer intervals, and AGV movements into intermediate velocity states, allowing smooth ac-/deceleration to mimic real-world dynamics. However, it inevitably enlarges the search space and decision-making time, while the higher number of steps demands faster decisions, making real-time operation even more challenging.

Secondly, to reduce the action search space, we propose \textit{Alternating Decision-Velocity Control (Alt-DVC)}, which first decouples direction and velocity decisions and reduces their transition sizes. The \textit{Macro-Routing Layer} operates on a coarse grid map using RL-based decision-making to generate global paths, while the \textit{Micro-Velocity Control Layer} runs on a fine-grained map with predefined velocity modes, reducing action frequency and computational cost by dynamically switching between modes. Additionally, \textit{Alt-DVC} becomes the first RL-based method to avoid collisions. \textit{Static constraints} treat AGVs as dynamic obstacles to prevent overlapping positions, and \textit{priority scheduling} resolves path conflicts by assigning right-of-way, reducing congestion and wait times. This approach minimizes collision risks, equipment wear, and task interruptions, significantly improving operational efficiency and safety.

Finally, the above constraints and priority handling make decision-making time larger the AGV's physical execution time, causing loss of control in practice. This arises from the high complexity of existing observation and priority scheduling methods. To address this, we propose \textit{Asynchronous Decision-Velocity Control (Asy-DVC)} optimized with \textit{Observation Pruning} and \textit{Map-Oriented Priority} to reduce complexity from quadratic to linear. This ensures decision-making remains faster than execution time, enabling efficient and seamless AGV control in real-world warehouse environments.

Our contributions are summarized below: 
\begin{list}{$\bullet$}{\leftmargin=1em \itemindent=0em}
    \item We propose the HiRAD to enable a large-scale AGVs collision-free routing system to work in kinematic constraint environments flexibly and efficiently;
    \item We propose a high-resolution spatiotemporal model that adapts the step-based RL framework to the continuous environment; 
    \item We propose a hierarchical Alternating Decision-Velocity Control to decouple the direction and speed decision-making to reduce the search space and avoid collisions;
    \item We propose Asynchronous Decision-Velocity Control with observation pruning and map-oriented priority to further reduce the decision-making complexity for real-life use;
    \item Evaluations with extensive experiments show that our approach outperforms the state-of-the-art solutions.  
\end{list}

\section{Related Work}
\label{sec:Related}
\subsection{Single-Agent Pathfinding}
\label{subsec:Related_SAPF}
These methods focus on single AGV path planning and differ in environmental information accessibility \cite{zhang2018path,liu2023path}. In \textit{local pathfinding}, the environment is partially known via LiDAR or visual sensors, resembling general robotics but not AGVs. 
\textit{Global pathfinding}, where the full environment is known, employs models like Voronoi diagrams \cite{barer2014suboptimal,wang2016voronoi,gomez2020hybrid,stenzel2021automated}, probabilistic maps \cite{geraerts2004comparative}, and geometric methods \cite{liang2018geometrical}. Sampling-based algorithms like \textit{RRT} \cite{lavalle1998rapidly} and \textit{RRT*} \cite{karaman2011sampling} incrementally build search spaces. For AGV warehouses, environments are modeled as \textit{grids} \cite{howden1968sofa} or \textit{topological graphs} 
\cite{chien1984planning}, enabling graph algorithms like \textit{Dijkstra's} \cite{dijkstra1959note} 
and $A^{*}$ \cite{hart1968formal}, and their dynamic variants
$D^{*}$ \cite{stentz1994optimal}, \textit{Lifelong} $A^{*}$ 
\cite{koenig2004lifelong}, and $D^{*}$-lite \cite{koenig2005fast}, to adjust for new obstacles, but they struggle with multi-agent, collision-free, optimal pathfinding due to slow computation. Path indexes like \textit{CH} 
\cite{geisberger2008contraction,ouyang2020efficient} and \textit{HL} \cite{cohen2003reachability,ouyang2018hierarchy,akiba2013fast,zhang2021dynamic,zhang2021efficient,zhang2021experimental} are fast but cannot handle collisions.

\subsection{Multi-Agent Pathfinding MAPF}
\label{subsec:Related_MAPF}
Multi-agent scenarios aim to coordinate AGV fleets efficiently without conflicts. A simple approach is to use single-agent algorithms, detect conflicts, and re-route dynamically \cite{bailey1990automated,dai2011artificial,riman2024novel}, but re-routing is time-consuming and reduces efficiency. Classical optimization techniques like \textit{Ant Colony} \cite{lissovoi2013runtime}, \textit{Particle Swarm} \cite{mohiuddin2016fuzzy}, and \textit{Genetic Algorithms} \cite{shorakaei2016optimal} are unsuitable for real-time MAPF due to slowness and batch processing. \textit{CBS} \cite{boyarski2015icbs,li2019improved,dai2011artificial} and \textit{ODrM*} \cite{ferner2013odrm} improve multi-agent search efficiency but require simultaneous planning for all agents, limiting online application. \textit{Cooperative A$^*$} \cite{6907401} plans paths individually but is restricted to discrete environments and lacks support for kinematics or lifelong learning. \textit{OHSMD} \cite{wu2025lifelong,wu2026Demo}, derived from time-dependent routing \cite{li2017minimal}, operates in continuous environments with kinematic constraints and lifelong scheduling, but its centralized system requires global re-planning after incidents, making it inflexible and impractical.
\subsection{RL-based Pathfinding}
\label{subsec:Related_RLPF}
Recent advancements have integrated deep reinforcement learning (DRL) and graph neural networks (GNNs) into pathfinding, offering greater flexibility by identifying optimal actions based on the current environment and single-agent training. \textit{CADRL} \cite{chen2017decentralized} replaced traditional online computations with offline-trained value networks for decentralized, non-communicating multi-robot settings. Ding \cite{ding2018hierarchical} introduced a hierarchical RL framework using LiDAR data, with high-level risk assessment and low-level action decisions, while \cite{long2018towards} addressed multi-robot collision avoidance using 512-dimensional LiDAR data, target positions, and robot velocities as inputs, with continuous translational and rotational velocities as outputs. However, these studies were limited to AGV fleets of 32 or fewer, where classical algorithms remain effective. The \textit{PRIMAL} framework \cite{primal} scaled to 1024 agents, and occasionally 2048, marking a milestone in large-scale MAPF with RL-based approaches. Despite this success, PRIMAL operates in discrete settings, limiting its ability to simulate real-world AGV motion dynamics.

\section{Problem Formulation}
In an AGV network, the environment is modeled as a grid map $M$ consisting of $m \times k$ square grids, where each AGV occupies a single grid at any given time.  AGVs move between grids to complete tasks $\tau_i = \langle s_i, d_i \rangle$, where $s_i$ and $d_i$ are the start and destination positions. AGVs can perform five actions: moving to adjacent grids or staying in place ($a^t =\{\uparrow, \rightarrow, \downarrow, \leftarrow, \odot$\}).
To account for kinematic constraints, we additionally associate each time step with a scalar velocity $v_i^t$ (under a high-resolution modeling). Thus, the objective of our Continuous Reinforcement Learning Multi-Agent Pathfinding (C-RL-MAPF) problem is defined to compute collision-free paths $\mathcal{P} = \{p_i = \langle (x_i^t, y_i^t, v_i^t) \rangle\}_{i=1}^{N}$ for all $N$ AGVs that minimize the makespan (maximum path length) and flowtime (total path length). $\langle (x_i^t, y_i^t) \rangle$ is the location of the $i$-th AGV at time step $t$ with the velocity $v_i^t$.
The discrete action specifies the intended neighboring cell, while $v_i^t$ determines how this intent is realized over $\Delta t$.

\section{High-Resolution Spatiotemporal Modeling}
\label{sec:HRS}

To balance the trade-off between the coarse-grained discrete modeling and the computationally expensive on continuous modeling, we propose the High-Resolution Spatiotemporal (HRS) modeling. HRS integrates a fine-grained spatial map (\textit{HR-Map}) with a finer temporal decision-making step (\textit{HR-Step}), enabling discrete modeling to approximate continuous environments efficiently.

\begin{figure}[t!]
    \center
    \includegraphics[width=1.0\linewidth]{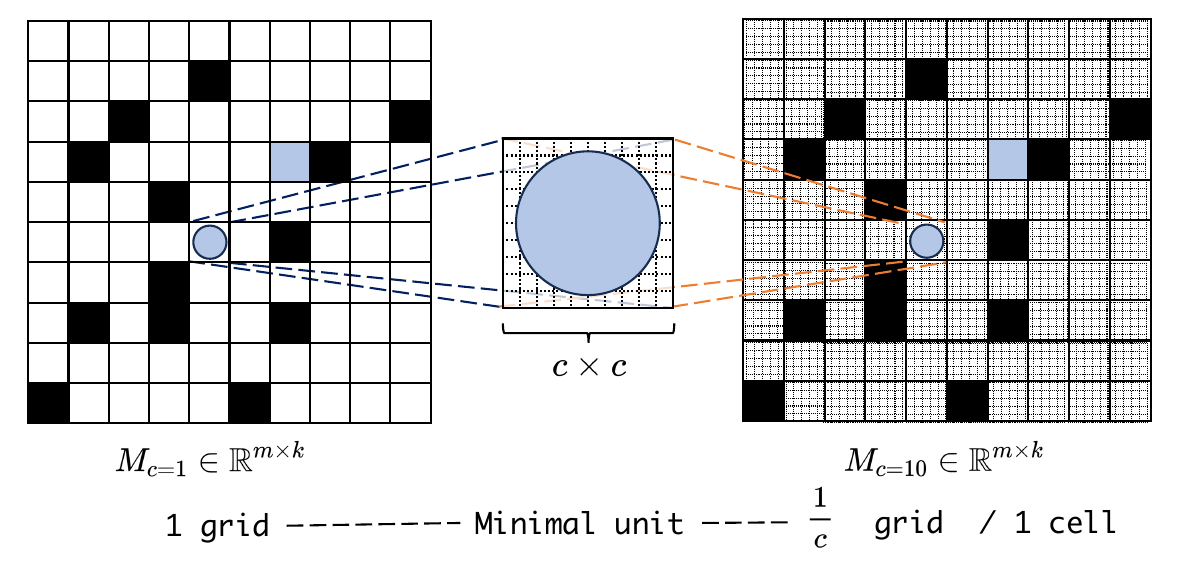}
    \caption{High-Resolution Spatiotemporal Modeling}
    \vspace*{-4mm}
    \label{fig:GridNetwork}
\end{figure}

\paragraph{High-Resolution Map and Step}
\label{subsec:HRS_Model}

The definitions are described as:
\begin{definition}[\textbf{High-Resolution HR-Map}]
\label{def:HRMap}
Given a grid network $M$, we increase the spatial resolution by dividing each grid into $c \times c$ smaller cells (as Fig. \ref{fig:GridNetwork}), resulting in an HR-Map $M_c \in \mathbb{R}^{(mc) \times (kc)}$, where $c > 1$. The original grid map $M = M_{c=1}$ is referred to as a Coarse Map.
\end{definition}

\begin{definition}[\textbf{High-Resolution HR-Step}]
\label{def:HRStep}
Given a decision interval $\delta$ in a coarse map, we divide it into $\Delta t = \frac{\delta}{c}$ HR-Steps such that AGVs make decisions at higher temporal resolution.
\end{definition}

In the HR-Map, AGVs can move between cells at varying speeds, which correspond to consecutive cell movements in HR-Steps: acceleration, constant velocity, or deceleration. This discrete yet fine-grained modeling is equivalent to continuous environments while maintaining computational efficiency.

\paragraph{HRS RL-MAPF Modeling}
\label{subsec:HRS_Modeling}

HRS RL-MAPF extends discrete RL-MAPF by incorporating velocity into the action space $a^t = (\mathbf{d}a^t, v^t)$, where $\mathbf{d}a^t$ represents direction choice in the classical MAPF problem and $v^t$ represents the number of cells moved per HR-Step. The velocity can increase, remain constant, or decrease between steps, reflecting acceleration and deceleration.

For a C-RL-MAPF where $c \rightarrow \infty$, the complexity for an RL-based method would increase from $(m\times k \times |\mathbf{d}a| )^{|\mathcal{N}|}$ to $(cm \times ck \times a \times |v|)^{|\mathcal{N}|}$ as the Expanded range of velocity $v^t$ and action directions adds complexity to the policy optimization.  The reward function $\mathcal{R}(s^t, a^t)$ also trapped in reducing positional error $e = \| (x^t, y^t) - (x^*, y^*) \|$ to ensure AGVs stop precisely at grid centers. 

Therefore, HRS modeling could achieve continuous, large-scale AGV routing while controlling complexity: instead of letting $c\to\infty$ and freely expanding $v^t$, HRS discretizes motion into a small set of kinematically feasible velocity modes and uses HR-step-based decisions to limit the effective action/velocity dimensions. In this way, HRS preserves continuous feasibility and coordination, while reducing both the spatiotemporal state growth and the joint action explosion compared with a naive continuous formulation.

\begin{figure*}[t]
    \centering
    \includegraphics[width=0.95\linewidth]{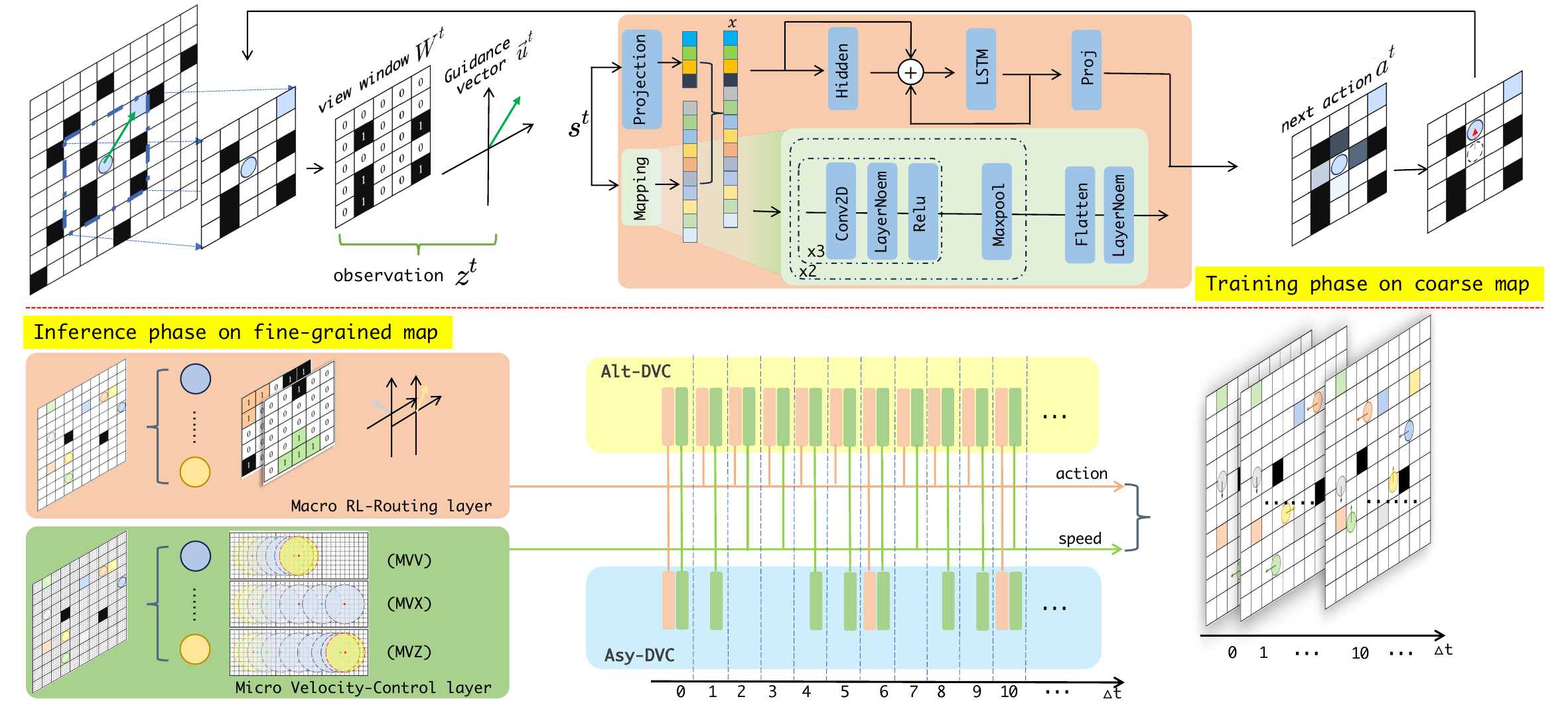}
    \caption{HiRAD: High-Resolution and Hierarchical Routing with Asynchronous Decision Framework.}
    \vspace*{-4mm}
    \label{fig:framework}
\end{figure*}

\section{HiRAD AGV Routing Framework}
\label{sec:hirad}
Figure~\ref{fig:framework} illustrates our HiRAD framework, which decouples AGV navigation into \emph{direction} and \emph{velocity} decisions. At the macro level, we propose a \textit{Macro RL-Routing Layer} (orange) that plans collision-free moving directions toward the goal using coarse-map information, where obstacles include both static structures and other AGVs treated as dynamic obstacles. At the micro level, we introduce a \textit{Micro Velocity-Control Layer} (green) to execute safe, fine-grained motions on the HR-Map. These two layers are integrated by the \textit{Alternating Decision--Velocity Control (Alt-DVC)} module (yellow) to drive the AGV, while the \textit{Asynchronous Decision--Velocity Control (Asy-DVC)} module (blue) further reduces decision frequency to improve inference efficiency. We next detail each component.

\subsection{Macro-RL-Routing Layer} 
\label{subsec:hirad_macro}
Macro-RL-Routing performs coarse-level navigation by outputting collision-free moving directions on the coarse grid. To avoid expensive multi-AGV training, we train a single-agent policy on randomly generated maps $M_{c=1}$ and reuse it as the macro routing module. At each step, the policy receives the partial observation state $z^t \approx s^t$ which includes a local window $W^t \in \mathbb{R}^{b\times b}$ and a guidance vector $\vec{u}^t \in \mathbb{R}^3$, where $b$ is the number of grids. Both of them are encoded and concatenated as the observation representation $x$, then fed into an LSTM to model temporal context. At last of module, a projection head produces the action distribution $\pi(a^t \mid s^t)$ (direction decision) and the state value $V(s^t)$ (critic signal). Training adopts curriculum learning (from small obstacle-free maps to large dense ones) with step/collision penalties and a difficulty-scaled terminal goal reward to encourage long-horizon routing. The specific training process and reward settings are displayed in Appendix \ref{appendix:hirad_macro_Training}.

We adopt the Macro-layer, which embeds the well-trained routing policy $\pi^*$ to generate the direction actions for all AGVs. Both the view window and the guidance vector are extended from the single-AGV form to accommodate all AGVs. The observation becomes $W \in \mathbb{R}^{|\mathcal{N}| \times b \times b}$, and the guidance vector becomes $\vec{u} \in \mathbb{R}^{|\mathcal{N}| \times 3}$, enabling simultaneous decision-making for the entire fleet.
As AGVs may traverse multiple fine-grained cells across grid boundaries under the HR-map, we must treat AGVs as dynamic obstacles occupying spatial regions. To handle this, we define AGV-specific \textit{exclusion zones} as follows:

\begin{definition}[\textbf{Exclusion Zone}]
    \label{def:Exclusion}
    The Exclusion Zone for a specific AGV $i$ is denoted as $\mathcal{E}_i \subseteq M_{c=1}$ ensures safe and efficient navigation, which consists of five grids: the two grids ahead in the AGV’s moving direction (active zone) and the grids to its left, right, and rear (passive zones). 
\end{definition}

The exclusion zone for each AGV consists of five grids that indicate its occupied and potentially occupied regions. These zones are used to proactively warn other AGVs of intended movements, enabling complete collision avoidance at the decision level.

\begin{figure}[t]
    \centering
    \includegraphics[width=1.0\linewidth]{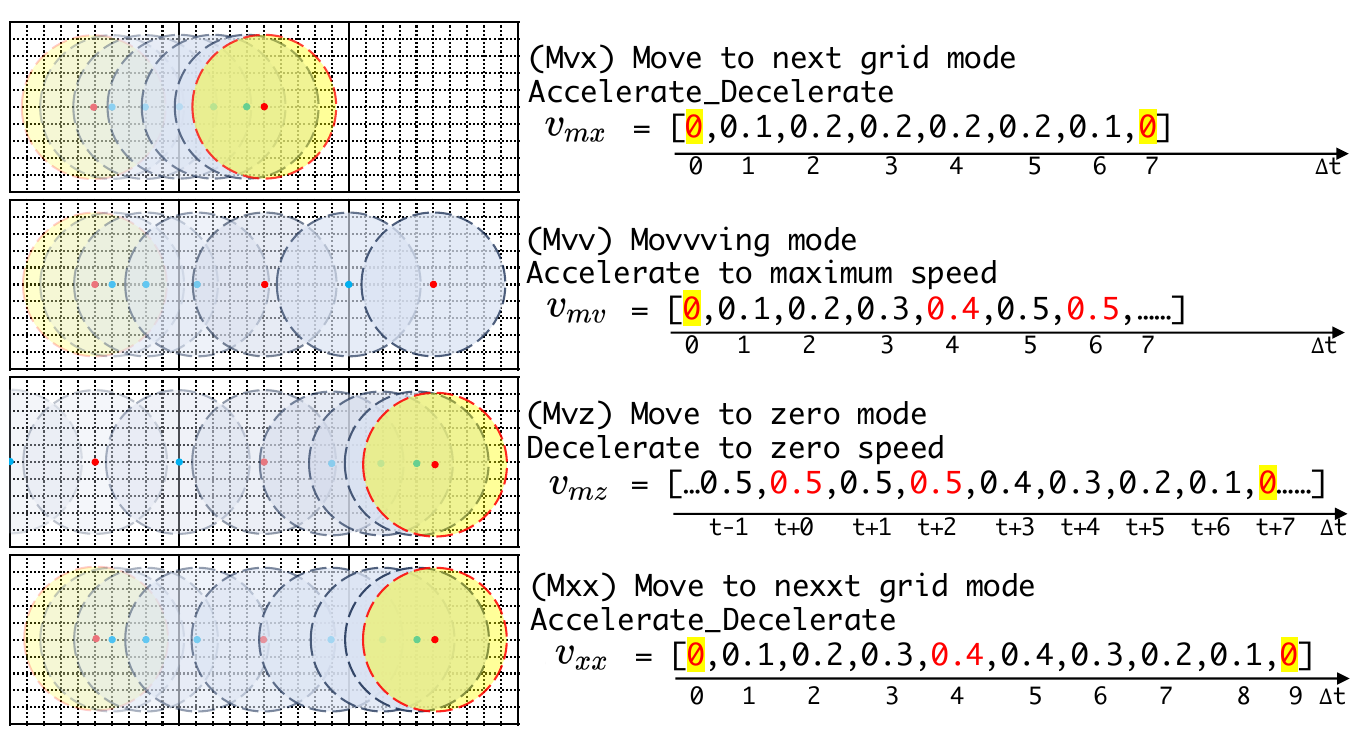}
    \caption{Velocity Mapping. The time units in yellow circles represent the HR-Steps during which the Macro-RL-Routing makes decisions. The time units in red, centered in circles, represent the HR-Steps to execute the Micro-Velocity-Control decisions for switching velocity modes.}
    \vspace*{-4mm}
    \label{fig:speedmode}
\end{figure}

\subsection{Micro Velocity-Control Layer}
\label{subsec:hirad_micro}
Micro Velocity-Control is a lightweight module that adjusts the AGV travel velocity online according to local motion requirements recommended by the Macro-Routing layer. We explain the fine-grained control on the HR-map with resolution factor $c=10$, where \emph{velocity} is mapped to the distance traveled within the minimum control interval $\Delta t$ (e.g., $v=0.2$ means moving $0.2$ grids, i.e., 2 HR-cells, per $\Delta t$). To satisfy kinematic constraints and enable precise stopping at grid centers, we predefine several velocity modes that follow a three-stage profile \textit{Acceleration--(Max)Speed--Deceleration}, covering typical cases such as moving to the next grid (Mvx), continuous cruising (Mvv), decelerating to stop (Mvz), and move to the second grid (Mxx) as Figure \ref{fig:speedmode}. 

Here, \textit{velocity} is converted to the distance an AGV can travel within the smallest time unit $\Delta t$. For example, a velocity of $0.2$ means that the AGV moves $0.2$ grids (\ie 2 HR-cells) per $\Delta t$, whereas a velocity of $0$ means staying at the grid center. The specific velocity distributions and behavioral characteristics of the three velocity modes are detailed as follows:

\begin{enumerate}[leftmargin=5mm]
    \item \textit{\textbf{Move to Next Grid}} (\textbf{Mvx}): This mode applies when the AGV needs to move only one grid. The velocity pattern is [0, 0.1, 0.2, 0.2, 0.2, 0.2, 0.1, 0], meaning the AGV slightly accelerates, then moves at a constant speed, and finally decelerates to a stop. It takes 7 $\Delta t$ to achieve the discrete movement;
    \item \textit{\textbf{Continuous Motion}} (\textbf{Mvv}): This mode is used when there is no obstacle ahead, allowing the AGV to accelerate to its maximum velocity and maintain high-velocity movement. The velocity pattern follows [0, 0.1, 0.2, 0.3, 0.4, 0.5, 0.5, ...], thereby improving operational efficiency. At the maximum velocity, it takes 2 $\Delta t$ to traverse a grid. In addition, it is guaranteed that at every HR-Step $\Delta t$ at 0.5 speed, the AGV is at the grid center or boundary;
    \item \textit{\textbf{Move to zero}} (\textbf{Mvz}): This mode is applied when the AGV needs to stop at the next grid after moving at maximum speed. The velocity pattern follows [..., 0.5, 0.5, 0.5, 0.5, 0.4, 0.3, 0.2, 0.1, 0], where the AGV maintains its peak speed initially and then decelerates uniformly to a full stop. The first velocity decision is made at time step $t+0$, where the AGV chooses to maintain its maximum velocity. Upon reaching the center of the next grid at $t+3$, it initiates deceleration, reducing speed at a constant rate to ensure it comes to still exactly at the grid center.
    \item \textit{\textbf{Move to Second Grid}} (\textbf{Mxx}): This mode applies when the AGV needs to stop at the second grid ahead. The velocity pattern is [0, 0.1, 0.2, 0.3, 0.4, 0.4, 0.3, 0.2, 0.1, 0], allowing for greater acceleration followed by a smooth deceleration to ensure stable movement and precise stopping. The AGV cannot reach its maximum speed in this mode.  
\end{enumerate}

Speed modes guaranteed real-time applicability conditions for HR-step-based decision making:
\begin{lemma}[\textbf{RL Real-Time Applicability Conditions}]
\label{lemma:Condition}
Given an HRS-Modeling and its corresponding velocity mapping, the AGVs can run continuously under kinematic constraints in real life only when the following conditions are satisfied:
\begin{enumerate}[leftmargin=5mm]
    \item The AGV stays at the grid center when it stops;
    \item The velocity mode satisfies kinematic constraints of three stages \textit{Acceleration-(Max)Speed-Deceleration};
    \item The decision time $\Delta t$ is shorter than the moving time unit.
\end{enumerate}
\end{lemma}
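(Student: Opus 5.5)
The argument establishes necessity: each of the three conditions is shown to be required by contradiction, using only the HRS model (Definitions~\ref{def:HRMap} and~\ref{def:HRStep}), the coarse-map decision semantics of the Macro-RL-Routing layer, and the velocity modes Mvx, Mvv, Mvz, Mxx. For each condition we assume it fails and exhibit an execution that either breaks the kinematic model or cannot be carried out in real time.

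\emph{Condition (1).} The Macro-RL-Routing layer chooses directions in $\{\uparrow,\rightarrow,\downarrow,\leftarrow,\odot\}$ on the coarse map $M_{c=1}$. Its observation window $W^t$ and the exclusion zones $\mathcal{E}_i$ (Definition~\ref{def:Exclusion}) are both indexed by coarse grids, so a stopped AGV must be identified with exactly one grid. Suppose instead that an AGV stops at an offset $e=\|(x^t,y^t)-(x^*,y^*)\|>0$. Then one of two things happens. Either it partially occupies two grids, and the five-grid exclusion zone no longer covers its footprint, so collision-freeness is lost. Or its next mode (for example Mvx, which covers exactly $\sum v = 1$ grid) ends off-center too. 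In the second case the error accumulates, and the discrete action no longer realizes the intended neighboring cell.

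To see that centering is attainable, we check that every mode's displacement sums to an integer number of grids:
\[
\text{Mvx}: 0.1+0.2\cdot4+0.1=1,\qquad \text{Mxx}: 2(0.1+0.2+0.3+0.4)=2,
\]
and Mvz ends at a grid center by its construction.

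\emph{Condition (2).} A physical AGV has bounded acceleration, here one velocity increment of $0.1$ per $\Delta t$, and bounded speed $0.5$. Any admissible velocity sequence therefore has consecutive differences of at most $0.1$ and values in $[0,0.5]$. Such a sequence starting and ending at $0$ must be nondecreasing to a plateau at or below the maximum and then nonincreasing. This is exactly the Acceleration--(Max)Speed--Deceleration profile. A mode violating it would require an instantaneous jump in velocity, which is the discrete teleportation failure of Figure~\ref{fig:moving}. We then verify that the four listed modes satisfy the profile.

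\emph{Condition (3)} is where I expect the main difficulty, because ``real-time'' has to be formalized. I would model execution as a pipeline: at each HR-Step the controller must output the next velocity before the AGV finishes the current $\Delta t$ of motion. If the decision latency exceeded the motion time unit, the AGV would reach an HR-Step boundary without a command. It would then either have to stop abruptly, violating condition~(2), or continue uncontrolled past a point where a deceleration decision was needed. The Mvz timing makes this concrete: deceleration must start exactly at $t+3$ for the AGV to stop at the center, so a late decision overshoots and also violates~(1).

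Taken together, the three contradictions show that each condition is necessary. I would add a remark that the argument is a modeling argument relative to these stated kinematic assumptions, not a purely formal theorem.
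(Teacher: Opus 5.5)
Your per-condition necessity argument has the same shape as the paper's treatment. The paper gives no formal proof, only a one-line rationale for each item: (1) keeps the motion consistent with the grid-network design; (2) makes the velocity mode follow physical law; (3) if deciding takes longer than moving, decisions become invalid and conflicts among moving AGVs can no longer be avoided. The problem is the step you use for condition (2), which is false. Bounded increments $|v^{t+1}-v^t|\le 0.1$, values in $[0,0.5]$, and $v=0$ at both ends do not force a nondecreasing-then-nonincreasing profile. The sequence $[0,0.1,0.2,0.1,0.2,0.2,0.1,0.1,0]$ respects every bound and contains no jump. It even covers exactly one grid, so it also satisfies condition (1), yet it is not of Acceleration--(Max)Speed--Deceleration form. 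Hence your claim that a mode violating the profile ``would require an instantaneous jump in velocity'' fails, and your argument does not establish necessity of (2). The three-stage shape is not a consequence of bounded acceleration. In the paper it is built into the kinematic model: the continuous RL-MAPF preliminaries stipulate acceleration at constant $\mathbf{a}$ up to $v_{\max}$ (or until the distance runs out), then cruising, then deceleration at $-\mathbf{a}$ to a stop at a grid center. The paper's justification of (2) simply appeals to that. You should either take the profile as a modeling assumption in the same way, or argue that the trapezoidal/triangular profile is the time-optimal rest-to-rest motion under these bounds; the latter gives optimality, not necessity.

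A smaller issue in condition (3): a late Mvz decision does not by itself force a violation of (1). At speed $0.5$ the AGV alternates between grid centers and grid boundaries, and the deceleration tail covers $0.4+0.3+0.2+0.1=1$ grid. Starting it one HR-Step late therefore ends on a boundary, but starting it two HR-Steps late ends on a center one grid further on. What is lost in that case is safety: the AGV runs into a grid that may lie in a higher-priority exclusion zone it was required to stop before. That is exactly the paper's reason for (3), so you should close your ``continue uncontrolled'' branch via collision-freeness rather than via (1). Finally, your displacement-sum check that centering is attainable belongs to the proof of the HRS-Continuous Equivalence theorem, which verifies that the modes satisfy the conditions. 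It is harmless here but not needed for this lemma.
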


With Lemma~\ref{lemma:Condition}, our HRS modeling and the HR-step-based control admit continuous execution under kinematic constraints:

\begin{theorem}[\textbf{HRS-Continuous Equivalence}]
\label{theorem:Equivalence}
The proposed HRS modeling, combined with HR-step-based decision-making, enables the AGV to run continuously under kinematic constraints.
\end{theorem}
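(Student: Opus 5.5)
The plan is to prove Theorem~\ref{theorem:Equivalence} by checking that the HRS modeling with the four velocity modes (Mvx, Mvv, Mvz, Mxx) meets each of the three conditions of Lemma~\ref{lemma:Condition}, and then invoking the lemma. The conditions are treated as sufficient for continuous execution under kinematic constraints. I would first make the notion of ``running continuously'' precise. An AGV trajectory is the concatenation of velocity-mode segments. It runs continuously if every junction between consecutive segments has matching position and velocity, and if every segment respects bounded acceleration: consecutive velocities differ by at most $0.1$ grid per $\Delta t$, i.e.\ one HR-cell per HR-Step with $c=10$.

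\textbf{Condition 1 (stopping at grid centers).} This is a per-mode arithmetic check. Summing each mode's velocity sequence gives an integer number of grids:
\begin{itemize}
\item Mvx: $0.1+0.2\cdot 4+0.1=1$;
\item Mxx: $2(0.1+0.2+0.3+0.4)=2$;
\item Mvz: the deceleration tail $0.4+0.3+0.2+0.1=1$ begins at a grid center.
\end{itemize}
For Mvv, cruising at $0.5$ places the AGV at a grid center or boundary at every HR-Step. Its acceleration prefix $0.1+\dots+0.5=1.5$ also ends on a center or boundary, so Mvz can only be entered from a center. Every velocity-zero state therefore coincides with a grid center.

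\textbf{Condition 2 (kinematic profile).} Inspecting the sequences shows each is unimodal, of the form Acceleration--(Max)Speed--Deceleration, with successive differences in $\{-0.1,0,0.1\}$. The main issue is the junctions between modes. I would build a small automaton whose states are the boundary conditions $(v=0,\text{center})$ and $(v=0.5,\text{center/boundary})$:
\begin{itemize}
\item Mvx, Mxx and Mvv start from $v=0$;
\item Mvv ends or continues at $v=0.5$;
\item Mvz starts at $0.5$ and ends at $0$.
\end{itemize}
Every transition the Micro layer allows connects matching boundary states, so velocity is continuous across junctions and acceleration stays bounded.

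\textbf{Condition 3 (decision time).} I would argue from the layered design. Macro decisions are made only at the HR-Steps marked in Fig.~\ref{fig:speedmode}, at grid centers or boundaries, and the Micro layer only switches among precomputed modes. The per-step computation is therefore bounded by a constant (or linear, once Asy-DVC is applied). Choosing $\Delta t$ above this bound makes decisions finish within one moving time unit.

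The main obstacle is Condition 3. It is an engineering assumption rather than a mathematical property, so I would state it explicitly as a hypothesis on the hardware and latency, with the later complexity results for Asy-DVC as supporting evidence. The other delicate point is completeness of the mode-transition automaton: I must verify that no macro decision sequence forces a transition not covered by the four modes. I would handle this by case analysis on whether the next one or two coarse grids along the chosen direction are free, given the exclusion zones of Definition~\ref{def:Exclusion}.
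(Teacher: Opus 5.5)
Your plan follows the paper's proof. You verify Conditions 1 and 2 of Lemma~\ref{lemma:Condition} by inspecting the velocity modes, and you treat Condition 3 as a computational requirement met by the complexity reductions of Section~\ref{sec:Optimization}. Your per-mode sums and the $\pm 0.1$ increment check are fine. So is your remark that you are using the lemma's (necessary) conditions as sufficient.

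One step, however, does not go through as written. In Condition 1, the inference ``ends on a center or boundary, so Mvz can only be entered from a center'' is a non sequitur. It matters: if the $0.4+0.3+0.2+0.1=1$ deceleration tail started at a boundary, the AGV would stop on a boundary. What you need is a timing rule, and it is the one the paper's proof uses: deceleration is decided only when the AGV is at a grid center (the switching HR-Steps of Fig.~\ref{fig:speedmode}). At speed $0.5$ the position alternates between center and boundary at every HR-Step, so a center is always reached within at most one more step. This is what the $0.5$ lead-in of Mvz provides. Your automaton must therefore split $(v=0.5,\text{center/boundary})$ into two parity states and allow Mvz only from the center state.

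Your automaton also lacks the switching point the paper's proof singles out. After $0.1+0.2+0.3+0.4=1.0$ grids, an accelerating Mvv AGV is at a grid center with $v=0.4$. From there it either continues to $0.5$, or stops one grid later via $0.4,0.3,0.2,0.1,0$, which is the Mxx tail. The two modes share this prefix, so the junction is consistent. Without a state $(v=0.4,\text{center})$, your completeness case analysis cannot handle an AGV that must stop while still accelerating, \eg by passive deceleration.

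Your Condition 3 paragraph also overstates things in two ways:
\begin{itemize}
\item A synchronous Alt-DVC step costs $O(nb^2+n^2)$, not a constant. This is exactly why the paper defers this condition to Asy-DVC.
\item $\Delta t$ is not a free knob. The modes are specified in grids per $\Delta t$ and calibrated to the vehicle, so enlarging $\Delta t$ to absorb latency slows the physical motion.
\end{itemize}
Stating Condition 3 as an explicit latency hypothesis, as you propose, is the right resolution.
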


\noindent The proof is described in Appendix~\ref{appendix:proof}. Finally, while we use $c=10$ as an illustrative setting, the specific resolution and mode parameters can be calibrated to different physical systems; the conditions in Lemma~\ref{lemma:Condition} remain unchanged.


\subsection{Alternating Decision--Velocity Control (Alt-DVC)}
\label{subsec:hierarchical_altdvc}
Alt-DVC is the online \emph{synchronous} execution mechanism that alternates macro direction planning and micro velocity adjustment at every HR-step $\Delta t$. At each cycle, the Macro-RL-Routing layer outputs a desired discrete direction $a^t=\pi^*(a\mid s^t)$, and the Micro Velocity-Control layer maps it into feasible continuous motion on the HR-map $M_{c=10}$ under kinematic constraints. Unlike standard RL-MAPF (where agents move one grid per step and effectively ``stop'' at each cell), Alt-DVC must additionally decide (i) \emph{conflict resolution} among moving AGVs with different velocities, and (ii) \emph{when to decelerate} so that an AGV can stop at an appropriate grid center for reorientation. To prevent collisions, Alt-DVC detects \emph{potential collisions} by checking whether an AGV's active zone intersects any other AGV's exclusive zone (Definition~\ref{def:Exclusion}); such conflicts are then resolved online via \textit{priority scheduling}, where higher-priority AGVs may proceed while lower-priority ones treat the higher-priority exclusive zones as obstacles and stop before entering. Priorities are ranked primarily by velocity and then refined by local obstacle-based tie-breaking.
 In the following, we first define collision, then present how to schedule the priority of AGVs to avoid collisions, and finally present how to make the decision on deceleration.
\begin{definition}[\textbf{Potential Collisions Condition}]
\label{def:collisions}
    Given any AGV $i$ with position $l_i^t=(x_i^t,y_i^t)$ at timestep $t$, and each AGV keeping an exclusive zone $ \mathcal{E}_i $ (as defined in Definition \ref{def:Exclusion}), a potential collision is identified occurs when $AGV_i$'s active zone intersects with the exclusive zone of any other $AGV_j$ where $ j \neq i $. 
\end{definition}
It should be noted that the above condition only detects the potential collision, which is subject to being resolved by the following priority scheduling.

\paragraph{\textit{AGV Scheduling Priority}}
\label{subsubsec:method_Velocity_Priority}

We allow the AGV with higher priority to enter the Exclusion Zones of the lower-priority AGVs. Then, AGVs with lower priority must treat the Exclusion Zone of a higher-priority AGV as obstacles, stopping before entering it. This mechanism always allows the higher AGV to keep running, so it can achieve smooth AGV movements without collisions. The AGV priority is determined based on the following criteria:

\begin{enumerate}[label=(\alph*),leftmargin=5mm]
    \item \textit{Velocity-Based Priority}: Collisions at higher velocities can lead to severe property damage, and it takes a longer time and space for a faster AGV to stop, so the priority is first ranked by velocities;
    \item \textit{Obstacle-Awareness Priority}: An AGV with fewer obstacles within its observation window $W$ has greater maneuverability. For example, suppose a set of AGVs is clustered together; then the AGVs on the outline of the cluster should move away to bring more space for the inner AGVs. Thus, the AGVs with fewer obstacles have higher priority;
    \item \textit{Local-Flexibility Priority}: When there is still a tie in the priority, then we shrink the view window incrementally to compare the number of the nearby obstacles until one has fewer obstacles.
\end{enumerate}

Additionally, Alt-DVC triggers deceleration in two situations: \emph{passive deceleration} when required by priority-based collision avoidance, and \emph{active deceleration} when continuing forward would increase the distance to the goal, using a constant-time heuristic comparing the goal distances of the next and second-next grids ($d_x$ vs.\ $d_{xx}$), so that the AGV stops at the next grid and allows the macro layer to update direction.

\noindent\textbf{Case1 Passive Deceleration}: When an AGV has potential collisions (Definition \ref{def:collisions}) with a lower priority, it has to stop passively to avoid collision.

\noindent\textbf{Case2 Active Deceleration}: When the current moving direction does not lead straight to the target location, then the AGV needs to stop and let the RL-routing modify the direction at an appropriate position before continuing toward it. One straightforward way to adjust the deceleration is according to the RL-Routing decision. If the recommended action is not aligned with the current AGV's heading, then the AGV should start decelerating and turn to the recommended heading direction. However, due to the objective of the RL-based method being shorter, the path length finding without considering stopping and turning takes more time. So, frequent stopping and starting will prolong the running time. Another solution adopted by \textit{PRIMAL} is utilizing $A^*$ to compute a path on the clean network as guidance at each step $\delta$. But it has a very high complexity ($O(\ell \times (mk\log mk + 4\cdot mk))$ where $\ell$ is the path length, $mk$ is the grid number, and $4 \cdot mk$ is the edge number) and also a large result path space to choose from. In addition, it is against the flexibility motivation of RL. Therefore, we take a distance-based heuristic to achieve this efficiently and effectively. Specifically, we compare the distance between the second grid ahead and the target, denoted as $ d_{xx} $, with the distance between the first grid ahead and the target, denoted as $ d_x $. If $d_{xx} > d_x $, it means that the AGV is moving away from the target and should be assigned a stop flag, requiring it to stop at the next grid before reorienting itself. The complexity of the distance-based heuristic is reduced to constant.

\section{Inference Efficiency Optimization} 
\label{sec:Optimization}
Given $n$ AGVs, Alt-DVC executes macro direction inference and micro velocity/conflict resolution at each HR-step $\Delta t$. In a straightforward synchronous implementation, all AGVs upload local observations $s^t=(W^t,\vec{u}^t)$ to query the shared policy, and the controller aggregates these local windows to support global priority scheduling, leading to $O(nb^2)$ data per step (and $O(nb^2T)$ over $T$ steps) for observation preparation. Moreover, priority-based conflict checking requires each AGV to be tested against higher-priority ones, resulting in $1+\cdots+n = O(n^2)$ pairwise checks per step (i.e., $O(n^2T)$ over an episode). These costs motivate our inference-efficiency optimization in the following.

\subsection{Asynchronous Decision--Velocity Control (Asy-DVC)}
\label{subsec:Optimization_Asyn}
As analyzed above, the synchronous Alt-DVC inference incurs $O(nb^2T+n^2T)$ time, which may make the decision latency longer than the movement time and thus violate the real-time applicability conditions (Lemma~\ref{lemma:Condition}). To enable real-time execution, we propose \textit{Asy-DVC}, which replaces Alt-DVC with two pruning techniques that reduce both observation and priority-check overhead.

\subsubsection{Observation Pruning}
\label{subsubsec:Optimization_Asyn_Observation}
The key observation is that, due to kinematic constraints, an AGV with nonzero velocity cannot instantaneously change heading; directional decisions only become actionable when the AGV is able to stop and reorient. Therefore, Asy-DVC skips policy inference for \emph{moving} AGVs and only queries the macro policy for the subset of AGVs that are at decision points (typically when $v=0$). Moving AGVs directly proceed with micro-level velocity evaluation (primarily predicting whether to stop ahead), while only a smaller subset performs the $O(b^2)$ observation-and-inference step. Under our velocity-mode design (Fig.~\ref{fig:speedmode}), this reduces the observation cost from $O(nb^2T)$ in Alt-DVC to about $O(\tfrac{1}{4}nb^2T)$ in Asy-DVC (i.e., on average one macro decision every four HR-steps).

\subsubsection{Map-Oriented Priority}
\label{subsubsec:Optimization_Map}
Alt-DVC resolves conflicts via AGV-centric pairwise checking, yielding $O(n^2)$ checks per cycle and $O(n^2T)$ overall. Asy-DVC removes this quadratic factor by using a \emph{map-oriented} index on the coarse map $M_{c=1}$: at each cycle, we insert all AGVs into a grid-indexed priority structure (per grid, keep AGVs sorted by priority); when AGV$_i$ evaluates a candidate motion, it only looks up the relevant destination grid(s) and compares against the top-priority entry, accepting the move if it remains highest-priority locally, otherwise slowing/stopping or deferring to re-planning. This replaces global pairwise comparisons with constant-time grid lookups, reducing priority handling from $O(n^2T)$ to $O(nT)$.

\noindent\textbf{Overall complexity.} Combining the two prunings, Alt-DVC has observation complexity $O(nb^2T)$ and priority complexity $O(n^2T)$, for a total of $O(nb^2T+n^2T)$. In contrast, Asy-DVC reduces these to $O(\tfrac{1}{4}nb^2T)$ (observation pruning) and $O(nT)$ (map-oriented priority), yielding an overall complexity of $O(\tfrac{1}{4}nb^2T+nT)$.

\section{Experiment}
\label{sec:Experiment}
\subsection{Experiment Setup}
\label{subsec:Experiment_Setup}
\subsubsection{Experiment Environment}
All experiments are conducted on a Linux server with two Intel Xeon Platinum 8375C 2.9GHz, 500 GB memory, and two GeForce RTX 4090 GPUs. The proposed model and RL-baselines are built with Pytorch 2.1.1 and Python 3.8.19, and ICBS and OHSMD are implemented in C++ with full optimization. 
\begin{figure}[t!]
    \centering
    \includegraphics[width=\columnwidth]{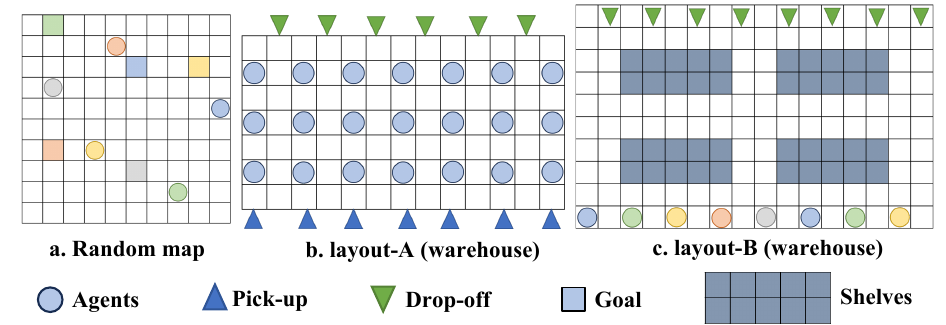}
    \caption{AGV Network Layouts}
    \label{fig:mapstyle}
\end{figure}

\subsubsection{AGV Networks and Experiment Designs}
\label{subsubsec:Experiment_Setup_Map}
We set the high-resolution factor $c=10$ and observation range $b=10$ grids. The RL training hyper-parameters are $\gamma = 0.9$ and $\lambda = 0.5$. The algorithms are tested on the following three types of networks:

\textit{\textbf{Random Map}}: We test on three worlds of size \{40, 80, 160\} with a fleet size of \{64, 128, 256, 512, 1024, 2048\}. The source and goals are generated randomly for each AGV. 
For each feasible parameter combination, we conducted 10 trials with randomly initialized maps, running each trial 10 times and averaging the results for comparison. We excluded certain infeasible scenarios: tests were not conducted with 256 or more AGVs in the world 40, nor with 1024 agents in the world 80. 

\textit{\textbf{Warehouse Layout-A}}: This layout comes from real-life warehouses \cite{wu2025continuous}, where the pickup location is on the top, and the delivery locations are at the bottom. There is no fixed obstacle as it requires more AGVs to operate together. We test on two different network sizes: $G_1$ with 16$\times$214 grids, and $G_2$ with 32$\times$428 grids. The top and bottom rows are designated for 100 pick-up and drop-off points, with one grid separating the adjacent counterparts. The fleet sizes are \{100, 200, 300, 400\}. Some cases are infeasible, like more than 200 AGVs on map $G_1$. The tasks are lifelong in the warehouse, so the AGVs continuously travel between pick-up and drop-off locations to complete 15,000 tasks set together.


To ensure unbiased evaluation, all methods were tested under the same map set and task set.

\subsubsection{Baselines}
\label{subsubsec:Experiment_Setup_Baseline} 
We only compare with the baselines that can scale to our experiment setting. For the graph searching baselines, we compare with i) \textit{ICBS} \cite{boyarski2015icbs}, which is an efficient extension of the classic \textit{CBS} \cite{sharon2015conflict}, and ii) \textit{ODrM}$^*$ \cite{odrm}. For the lifelong tasks, we compare with \textit{OHSMD} \cite{wu2025lifelong}. For the RL-based methods, we compare with the state-of-the-art \textit{PRIMAL} \cite{primal}, as it is the only one that can scale to the large fleet. Finally, we use \textit{HiRAD-Asy} to denote the optimized version with the Asy-DVC, and \textit{HiRAD-Alt} to denote the original Alt-DVC version.

\subsubsection{Metrics}
\label{subsubsec:Experiment_Setup_Metrics}
For the discrete methods, we compare with the following metrics: (1) \textbf{\textit{RunTime (RT)}}: The total code running time to complete the path-planning for all agents; (2)  \textit{\textbf{MakeSpan}} \textbf{(MS (Steps)}: The number of HR-Steps required for the last agent to reach its goal, from start to finish; (3) \textit{\textbf{FlowPath}} \textbf{(FP)}: The total distance traveled by all agents to reach their respective goals
.
To ensure a fair comparison, we multiply the step count of the discrete algorithms by the high-resolution step count required to move one grid in HiRAD, as our method uses higher spatial resolution for discrete movements. 

For the continuous method OHSMD, we evaluate with (1) \textit{\textbf{RunTime}} \textbf{(RT)}: Same as the previous one; (2) \textit{\textbf{Makespan}} \textbf{(MS (time))}: The total time taken for the last agent to reach its goal, from start to finish; (3) \textit{\textbf{FlowTime}} \textbf{(FT)}: The sum of the times taken for all agents to reach their respective targets.
\begin{figure}[t]
    \centering
    \includegraphics[width=1.1\columnwidth]{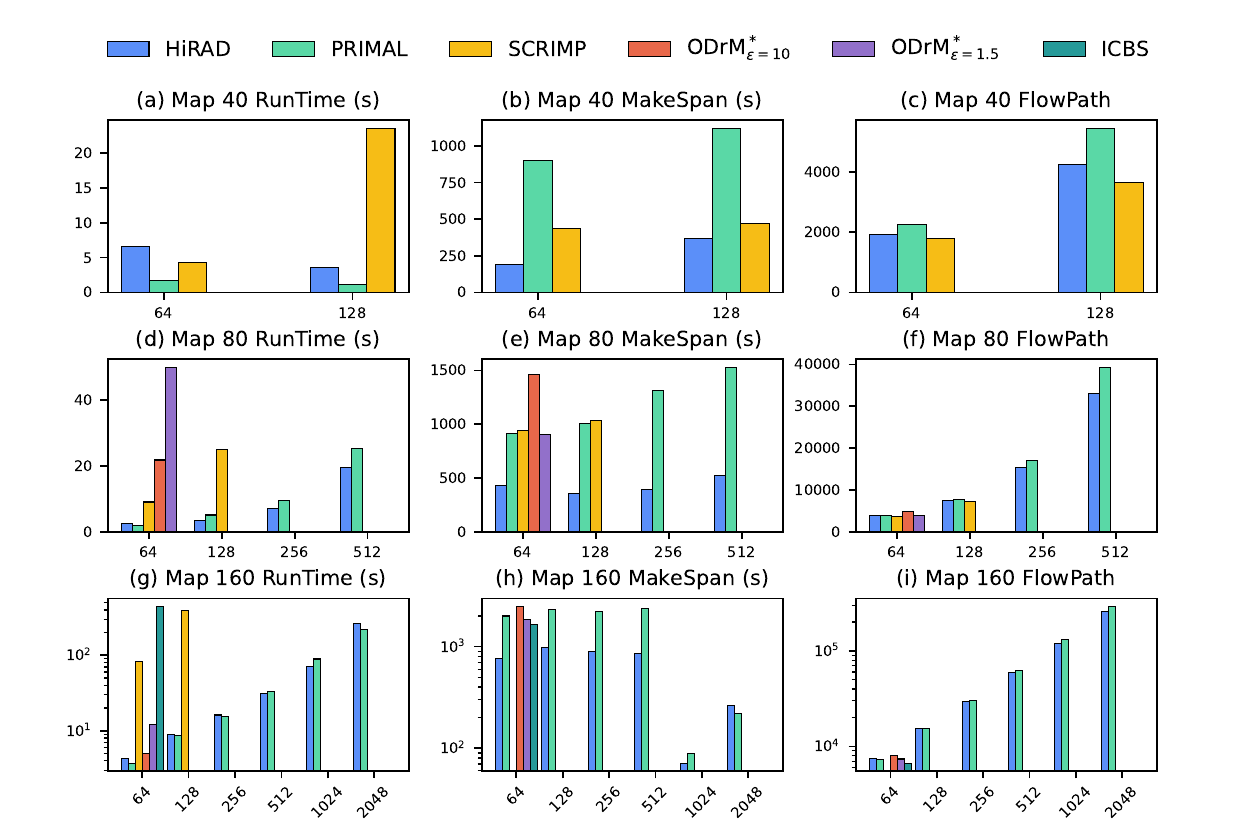}
    \caption{Performance of the AGV Routing Methods with Different Fleet Sizes on Random Maps}
    \vspace*{-4mm}
    \label{fig:performance1}
\end{figure}

\subsection{Random Map Testing}
\label{subsec:Experiment_Random}
Fig. \ref{fig:performance1} presents the performance of the on different map sizes and different fleet sizes. $ \otimes $ indicates that the scenario is not feasible, and `$-$' denotes that no result was obtained within the limited time of 600s. For the learning-based methods, since the process operates through an \textit{observe-decision-execute} cycle, we use the endurance time. Specifically, if no additional AGV reaches its destination within the defined endurance time $\tau_t$, the planning process terminates. At this point, the current completion time \textit{(runtime)} and the number of agents that have reached their destination \textit{(success number)} are recorded. This ensures that the algorithm does not exceed the predefined time limit, while still providing meaningful results within the given $\tau_t$. $\tau_t =5$(s) for the number of agents less than 2048 and $\tau_t =15$(s) for $n=2048$.

\begin{figure}[t]
    \centering
    \includegraphics[width=0.9\linewidth]{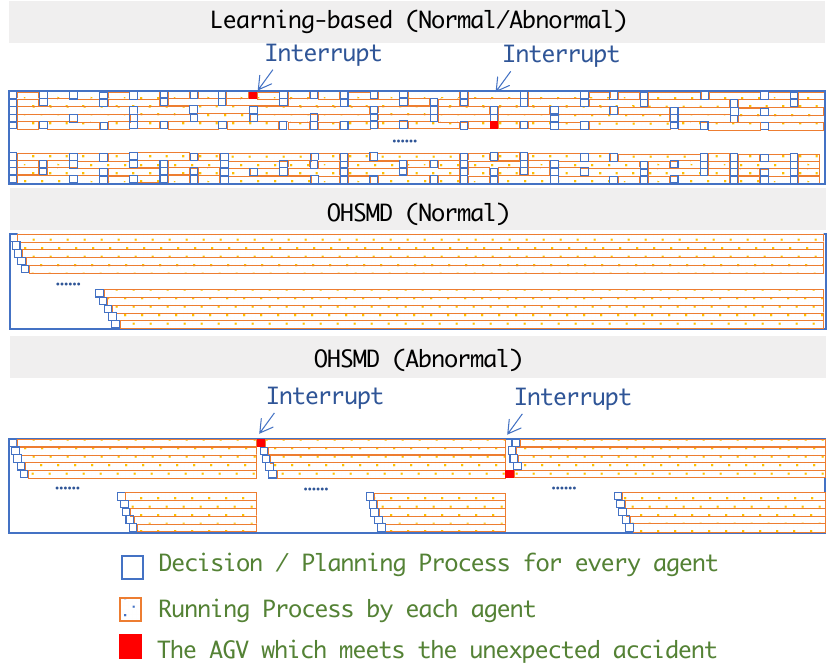}
    \caption{Flexibility Influence Comparison}
    \vspace*{-4mm}
    \label{fig:decision-mode}
\end{figure}

In terms of scalability, the RL-based methods can scale to a much larger fleet than the classical methods. Because \textit{ICBS} and \textit{ODrM$^*$} fail to provide results when the number of AGVs exceeds 64, the table does not display results for 128-2048 AGVs. Additionally, we also tested the performance of \textit{CBS}, but it was unable to provide results for 64 AGV, so its results are not shown in the table. Besides, OHSMD is explicitly designed for Warehouse A, so we do not test it here. In a map size of 160, both ICBS and ODrM$^*$ can only plan paths for up to 64 AGVs. However, PRIMAL and HiRAD can handle path planning for up to 2048 AGVs. 
\begin{table*}[t!]
\centering
\footnotesize
\setlength{\tabcolsep}{2pt}

\begin{minipage}[t]{0.45\textwidth}
\caption{RL Training}
\resizebox{0.9\columnwidth}{!}{%
\begin{threeparttable}    
\begin{tabular}{@{}r|cccccc@{}}
\toprule
\textbf{Map Size}           & \textbf{Small} & \textbf{Large} & \textbf{Large} & \textbf{Large} & \textbf{Large} & \textbf{Large}    \\ \midrule
\textbf{Obstacle Density}    & 0                 & 0                 & 0                 & 0.2               & 0.2                 & 0.2                 \\
\textbf{Reward}              & 25                & 25                & 55                & 55                & 55                & 25--\textgreater{}55   \\
\textbf{IAM}         & N                 & N                 & N                & N                 & Y                 & Y                     \\

\textbf{Curriculum}  & N                 & N                 & N                 & N                 & N                 & Y                      \\
\cmidrule{1-7}
\textbf{SR1} $^\dagger$    & 98\%    & 76\%              & 94\%              & 96\%              & 99\%              & 99\%                  \\
\textbf{Average Steps}       & 33              &  102      &   107                &   115                &  110                 &              99        \\

\textbf{Convergence}     & 4.1h                & 24h               & 18.5h               & 21h             & 17h               & 20h                 \\
\textbf{SR2} $^\dagger$    & 86.8\%             & 71\%              & 90\%              & 95\%              & 99\%              & 99\%                  \\
\bottomrule
\end{tabular}
    \begin{tablenotes}
        \footnotesize
        \item $^{\dagger}$ SR1 is measured by testing different maps under the same settings as the training environment, recording the rate of cases where AGVs successfully reach their targets. SR2 is evaluated on a fixed large map (160×160) with 1024 AGVs, calculating the success rate of AGVs reaching their destinations.
    \end{tablenotes}
\end{threeparttable}
}
\label{tbl:ablation}
\end{minipage}
\hfill
\begin{minipage}[t]{0.32\textwidth}
\caption{Ablation Study}
\resizebox{0.95\columnwidth}{!}{%
\begin{threeparttable}
    
\begin{tabular}{@{}l|c|cc|cc|c@{}}
\toprule
\textbf{\# Agents}      &\multirow{2}{*}{\textbf{Metrics}} &\multicolumn{2}{c}{\textbf{64}}  & \multicolumn{2}{c}{\textbf{256}} & \textbf{1024 }     \\ 
\cmidrule(lr){3-4}\cmidrule(lr){5-6}\cmidrule(lr){7-7}
\textbf{Map Size}  &  & \textbf{40} &\textbf{80}  & \textbf{80}&\textbf{160} & \textbf{160}      \\ \midrule
\multirow{3}{*}{\textbf{HiRAD}}    & \textbf{RT}                 & 3.1                 & 3.8                 & 28               & 34                 & 146                 \\
             & \textbf{MS}                & 103               & 276                & 425                & 737                & 799   \\
        & \textbf{CA}                 & 0                 & 0                & 0                 & 0                & 0                     \\

\cmidrule{1-7}
\multirow{3}{*}{\textbf{-priority}}    & \textbf{RT}    & 3.3              & 3.8             & 22              & 36              & 159                  \\
       & \textbf{MS}              &  176      &   276                &   411                &  730                 &              863        \\

    &\textbf{ CA}                & 4               & 1              & 22             & 7               & 255                 \\
\cmidrule{1-7}
\multirow{3}{*}{\textbf{\begin{tabular}[c]{@{}c@{}}-eclusive \\ Zone\end{tabular}}}     & \textbf{RT}   & 2.6              & 4.3             & 19             & x            & x                  \\
       &\textbf{MS}              &  164      &   286                &   359                &  x                 &              x        \\

    &\textbf{CA}               & 55               & 27               & 500             & 268               & 3937                 \\
\bottomrule
\end{tabular}
\end{threeparttable}
}
\label{tbl:dhc}
\end{minipage}
\hfill
\begin{minipage}[t]{0.2\textwidth}

\caption{Comparison with Cooperative DHC}
 \resizebox{0.85\columnwidth}{!}{%
  \scriptsize
\begin{threeparttable}
    
\begin{tabular}{@{}l|cccc@{}}
\toprule
  & \multicolumn{4}{c}{\textbf{HiRAD}}     \\ \cmidrule(lr){2-5}
  & \textbf{RT} & \textbf{MS} &\textbf{FP} & \textbf{SN}\\
  \midrule
  \textbf{40 }& 1.11 & 192 & 1932 & 64 \\
  \textbf{80} & 2.41 & 426.8 & 3792 & 64\\
  \midrule

 & \multicolumn{4}{|c}{\textbf{DHC}} \\ \cmidrule(lr){2-5}
& \textbf{RT} & \textbf{MS} &\textbf{FP} & \textbf{SN} \\
 \midrule
 \textbf{40} &   8 & 475& 1850 & 64\\
  \textbf{80} &  24.5 & 895& 3630&64 \\
  
\bottomrule
\end{tabular}
\end{threeparttable}
 }
\label{tbl:rltrain}
\end{minipage}
\vspace*{-4mm}
\end{table*}

In terms of the path quality and efficiency, when the fleet size is 64, ICBS achieves the shortest makespan and flow path, providing the highest path quality. However, it requires 446.92s to compute. On the other hand, ODrM$^*$ improves search efficiency by performing dimensionality reduction on the space, with ODrM$^*_{\epsilon=1.5}$ completing the path planning for all AGVs in 12.01s, and ODrM$^*_{\epsilon=10}$ requiring even less time, just 4.98s. ODrM$^*_{\epsilon=10}$ is the fastest classical method in terms of runtime. However, both PRIMAL and HiRAD offer faster runtime speeds than ODrM$^*_{\epsilon=10}$, while the Flow Path does not degrade significantly. Additionally, HiRAD, due to its integration with the velocity control module, can further achieve a shorter makespan while maintaining high efficiency. 

Compared to PRIMAL, HiRAD achieves lower runtime and maintains a lower makespan in most cases. This is because HiRAD allows some AGVs to operate at high velocity, enabling them to complete the pathfinding task much more quickly than when taking discrete, step-by-step actions. On a map of size 40, HiRAD achieves a makespan that is approximately 3-4 times faster than PRIMAL. On a map of size 80, HiRAD is about 2-3 times faster, and on a map of size 160, HiRAD achieves a makespan that is approximately 2 times faster. This demonstrates the efficiency of HiRAD in handling larger-scale pathfinding tasks while maintaining competitive performance in terms of makespan.

It is worth noting that both PRIMAL and HiRAD can successfully plan completion paths for almost all AGVs in all scenarios. However, when the AGV number increases to 2048 on map 160, the number of targets that PRIMAL can successfully reach within the endurance time $\tau_t$ drops significantly. In contrast, HiRAD consistently shows an increase in the number of AGVs that reach their destination, demonstrating its scalability and efficiency for large AGV fleets.

\begin{figure}[t!]
    \centering
    \includegraphics[width=\linewidth]{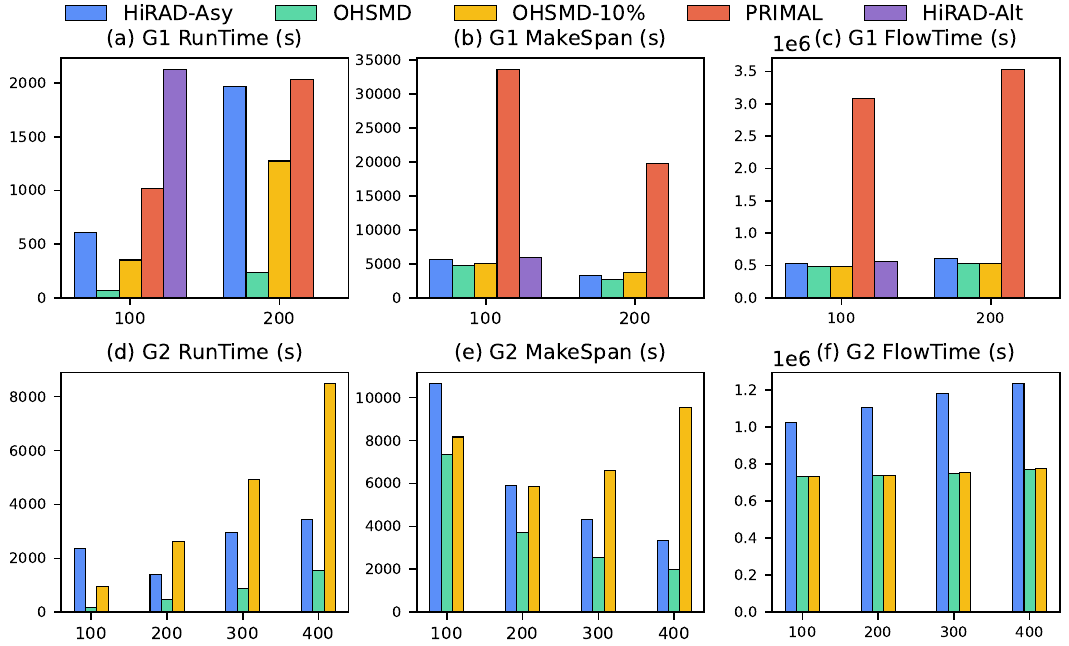}
    \caption{Performance of Lifelong Tasks in Warehouses}
    \label{fig:lifelong}
\end{figure}

\subsection{Warehouse Lifelong Task Testing}
\label{subsec:Experiment_Warehouse}
In this section, we test on the two warehouses with lifelong tasks. 
As shown in Figure \ref{fig:lifelong}, OHSMD performs best on map G1. This is not only because it is implemented in C++ but also because it is specially optimized for G1. However, the primary limitation of OHSMD is its inflexibility: its performance would deteriorate in unexpected situations. Figure \ref{fig:decision-mode} illustrates the running behavior of the RL-based methods and OHSMD. Suppose the horizontal line represents the time spent on the paths of a set of AGVs. The RL-based methods determine the actions step by step before the movement. Therefore, whenever an interruption happens on any one of the AGVs, it does not affect the others. However, OHSMD has to plan the routes for all the AGVs in serial. More importantly, it requires the actual AGVs to follow planned routes spatially and temporally exactly. When any AGV has a little difference in action (due to the network delay, mechanical residual accumulation, or other failures), the whole system has to halt to avoid collisions. Then, the system has to re-plan the remaining routes again based on the current AGVs' locations. Therefore, we further provide a simulation of 10\% of AGVs encountering issues as OHSMD-10\%, which has longer computation times and makespan on the large network than the RL-based methods. This demonstrates the importance of flexibility and the robustness of the proposed methods.

Compared to other learning-based methods, HiRAD-Alt and PRIMAL struggle to scale to a larger number of AGVs and bigger maps due to their slow runtimes. In contrast, the optimized HiRAD-Asy demonstrates higher time efficiency, enabling it to scale effectively with more AGVs and larger maps.

Overall, these results highlight HiRAD’s strong generalization capability and scalability across various warehouse environments. By efficiently handling larger fleet sizes and diverse spatial layouts, HiRAD-Asy proves to be a more robust and adaptable solution for lifelong multi-AGV pathfinding. This adaptability makes it well-suited for real-world applications where dynamic and complex environments demand flexible yet efficient routing strategies.

\subsection{RL-Routing Training Performance}
Since global path planning decisions are driven by reinforcement learning, the effectiveness of the strategy is highly dependent on the diversity of the environment of the training. Table \ref{tbl:rltrain} presents the test performance of models trained with different reward settings, obstacle densities, and action masks on maps of the same size as those used for training. The metrics include success rate (SR1), and average step count (equal to path length). Additionally, the table reports the success rate (SR2) on a specified larger random map with dimensions \(m = 160, n = 1024\). In this section, we define small maps as those with a size between 10 and 40, and large maps as those ranging from 40 to 100.

From Table \ref{tbl:rltrain}, it can be observed when the reward is set to 25 for small maps, SR1, the average steps and the convergence time perform well. However, the trained model does not generalize effectively to large maps, where SR2 shows a significant decline. Setting a small reward directly on the large map prolongs the learning process, increasing the convergence time. Meanwhile, its effect on test maps and multi-agent scenarios remains limited, as both SR1 and SR2 exhibit suboptimal performance.
Increasing the reward for large maps reduces the convergence time while significantly improving SR1 and SR2. This demonstrates that the AGV can successfully find a path in such cases, albeit not optimally. Increasing the obstacle density in the training phase enhances map diversity and raises the learning difficulty for the agent. When the obstacle density reaches 0.2, the convergence time increases; however, the success rate in multi-agent scenarios SR2 improves. 

Given the increased difficulty of exploration in complex maps to reach the goal, we adopt Invalid Action Masking (IAM) during training to filter out non-compliant actions and reduce ineffective explorations, thereby accelerating model convergence. This not only improves training efficiency but also effectively mitigates path redundancy caused by random decision-making, resulting in more optimized path generation. 
Ultimately, we introduced curriculum learning, where small maps are initially trained with small rewards. The difficulty of pathfinding is then gradually increased by expanding the map size, adding more obstacles, and raising the reward. This approach leads to better quality paths with a higher success rate while maintaining a reasonable convergence time.

\section{Conclusion}
\label{sec:Conclusion}
Enabling the AGVs to run efficiently, conflict-free, and flexibly in a real-life warehouse has been a dream for a long time. The existing solutions cannot be used in practice because of their discrete assumption or centralized planning fashion. In this work, the proposed HiRAD framework fills the gap between RL-based routing and practice for the first time. The HRS modeling enables the step-based RL run in continuous environments, the two-layer mechanism reduces the RL search space dramatically, and the asynchronous optimization further improves the inference efficiency such that RL-based AGV can finally run in practice. Experiments validate the effectiveness and efficiency of the proposed framework.

\bibliographystyle{ACM-Reference-Format}
\bibliography{sample-base}

@inproceedings{mnih2016asynchronous,
  title={Asynchronous methods for deep reinforcement learning},
  author={Mnih, Volodymyr and Badia, Adria Puigdomenech and Mirza, Mehdi and Graves, Alex and Lillicrap, Timothy and Harley, Tim and Silver, David and Kavukcuoglu, Koray},
  booktitle={International conference on machine learning},
  pages={1928--1937},
  year={2016},
  organization={PmLR}
}

@INPROCEEDINGS{6907401,
  author={Bnaya, Zahy and Felner, Ariel},
  booktitle={2014 IEEE International Conference on Robotics and Automation (ICRA)}, 
  title={Conflict-Oriented Windowed Hierarchical Cooperative A*}, 
  year={2014},
  volume={},
  number={},
  pages={3743-3748},
}

@inproceedings{yu2013structure,
  title={Structure and intractability of optimal multi-robot path planning on graphs},
  author={Yu, Jingjin and LaValle, Steven},
  booktitle={Proceedings of the AAAI Conference on Artificial Intelligence},
  volume={27},
  number={1},
  pages={1443--1449},
  year={2013}
}

@inproceedings{ferner2013odrm,
  title={ODrM* optimal multirobot path planning in low dimensional search spaces},
  author={Ferner, Cornelia and Wagner, Glenn and Choset, Howie},
  booktitle={2013 IEEE international conference on robotics and automation},
  pages={3854--3859},
  year={2013},
  organization={IEEE}
}

@inproceedings{dai2011artificial,
  title={Artificial intelligence for artificial artificial intelligence},
  author={Dai, Peng and Weld, Daniel and others},
  booktitle={Proceedings of the AAAI Conference on Artificial Intelligence},
  volume={25},
  number={1},
  pages={1153--1160},
  year={2011}
}

@inproceedings{li2019improved,
  title={Improved Heuristics for Multi-Agent Path Finding with Conflict-Based Search.},
  author={Li, Jiaoyang and Felner, Ariel and Boyarski, Eli and Ma, Hang and Koenig, Sven},
  booktitle={IJCAI},
  volume={2019},
  pages={442--449},
  year={2019}
}

@article{shorakaei2016optimal,
  title={Optimal cooperative path planning of unmanned aerial vehicles by a parallel genetic algorithm},
  author={Shorakaei, Hamed and Vahdani, Mojtaba and Imani, Babak and Gholami, Ali},
  journal={Robotica},
  volume={34},
  number={4},
  pages={823--836},
  year={2016},
  publisher={Cambridge University Press}
}

@article{riman2024novel,
  title={Novel Fuzzy Reinforcement Algorithm for Mobile Robot Navigation in Automated Storage},
  author={Riman, Chadi F and Abi-Char, Pierre E},
  journal={International Journal of Mechanical Engineering and Robotics Research},
  volume={13},
  number={2},
  year={2024}
}

@inproceedings{bailey1990automated,
  title={Automated aircraft engine costing using artificial intelligence},
  author={BAILEY, M and OVERTON, K},
  booktitle={26th Joint Propulsion Conference},
  pages={1887},
  year={1990}
}

@article{liu2020prediction,
  title={Prediction, planning, and coordination of thousand-warehousing-robot networks with motion and communication uncertainties},
  author={Liu, Zhe and Wang, Hesheng and Wei, Huanshu and Liu, Ming and Liu, Yun-Hui},
  journal={IEEE Transactions on Automation Science and Engineering},
  volume={18},
  number={4},
  pages={1705--1717},
  year={2020},
  publisher={IEEE}
}

@inproceedings{maoudj2022decentralized,
  title={Decentralized multi-agent path finding in warehouse environments for fleets of mobile robots with limited communication range},
  author={Maoudj, Abderraouf and Christensen, Anders Lyhne},
  booktitle={International Conference on Swarm Intelligence},
  pages={104--116},
  year={2022},
  organization={Springer}
}

@article{damani2021primal,
  title={PRIMAL $ \_2 $: Pathfinding via reinforcement and imitation multi-agent learning-lifelong},
  author={Damani, Mehul and Luo, Zhiyao and Wenzel, Emerson and Sartoretti, Guillaume},
  journal={IEEE Robotics and Automation Letters},
  volume={6},
  number={2},
  pages={2666--2673},
  year={2021},
  publisher={IEEE}
}

@incollection{dhaliwal2020rise,
  title={The rise of automation and robotics in warehouse management},
  author={Dhaliwal, Amandeep},
  booktitle={Transforming management using artificial intelligence techniques},
  pages={63--72},
  year={2020},
  publisher={CRC Press}
}

@inproceedings{wu2025lifelong,
  title={A Lifelong Conflict-Aware AGV Routing System},
  author={Wu, Ruizhong and Zhang, Mengxuan and Wang, Shuxin and Chan, Frodo Kin Sun and Law, Yan Nei and Li, Lei},
  booktitle={Australasian Database Conference},
  pages={447--462},
  year={2025},
  organization={Springer}
}

@article{wu2025continuous,
  title={Continuous Lifelong Conflict-Aware AGV Routing with Kinematic Constraint},
  author={Wu, Ruizhong and Zhang, Mengxuan and Wang, Shuxin and Chan, Frodo Kin Sun and Law, Yan Nei and Li, Lei},
  journal={Proceedings of the VLDB Endowment},
  pages={},
  year={2025},
publisher={VLDB Endowment}
}

@article{wu2026Demo,
  title={A Demonstration of Continuous Lifelong Conflict-Aware AGV Routing with Kinematic Constraints},
  author={Wu, Ruizhong and Zhang, Tianqi and Huang, Yunjie and Li, Lei},
  journal={Proceedings of the VLDB Endowment},
  pages={},
  year={2026},
publisher={VLDB Endowment}
}

@article{custodio2020flexible,
  title={Flexible automated warehouse: a literature review and an innovative framework},
  author={Custodio, Larissa and Machado, Ricardo},
  journal={The International Journal of Advanced Manufacturing Technology},
  volume={106},
  pages={533--558},
  year={2020},
  publisher={Springer}
}

@article{wurman2008coordinating,
  title={Coordinating hundreds of cooperative, autonomous vehicles in warehouses},
  author={Wurman, Peter R and D'Andrea, Raffaello and Mountz, Mick},
  journal={AI magazine},
  volume={29},
  number={1},
  pages={9--9},
  year={2008}
}

@article{bogue2016growth,
  title={Growth in e-commerce boosts innovation in the warehouse robot market},
  author={Bogue, Robert},
  journal={Industrial Robot: An International Journal},
  volume={43},
  number={6},
  pages={583--587},
  year={2016},
  publisher={Emerald Group Publishing Limited}
}

@inproceedings{chen2017decentralized,
  title={Decentralized non-communicating multiagent collision avoidance with deep reinforcement learning},
  author={Chen, Yu Fan and Liu, Miao and Everett, Michael and How, Jonathan P},
  booktitle={2017 IEEE international conference on robotics and automation (ICRA)},
  pages={285--292},
  year={2017},
  organization={IEEE}
}

@inproceedings{ding2018hierarchical,
  title={Hierarchical reinforcement learning framework towards multi-agent navigation},
  author={Ding, Wenhao and Li, Shuaijun and Qian, Huihuan and Chen, Yongquan},
  booktitle={2018 IEEE international conference on robotics and biomimetics (ROBIO)},
  pages={237--242},
  year={2018},
  organization={IEEE}
}

@inproceedings{long2018towards,
  title={Towards optimally decentralized multi-robot collision avoidance via deep reinforcement learning},
  author={Long, Pinxin and Fan, Tingxiang and Liao, Xinyi and Liu, Wenxi and Zhang, Hao and Pan, Jia},
  booktitle={2018 IEEE international conference on robotics and automation (ICRA)},
  pages={6252--6259},
  year={2018},
  organization={IEEE}
}

@INPROCEEDINGS{dhc,
  author={Ma, Ziyuan and Luo, Yudong and Ma, Hang},
  booktitle={2021 IEEE International Conference on Robotics and Automation (ICRA)}, 
  title={Distributed Heuristic Multi-Agent Path Finding with Communication}, 
  year={2021},
  volume={},
  number={},
  pages={8699-8705},
}

@ARTICLE{primal,
  author={Sartoretti, Guillaume and Kerr, Justin and Shi, Yunfei and Wagner, Glenn and Kumar, T. K. Satish and Koenig, Sven and Choset, Howie},
  journal={IEEE Robotics and Automation Letters}, 
  title={PRIMAL: Pathfinding via Reinforcement and Imitation Multi-Agent Learning}, 
  year={2019},
  volume={4},
  number={3},
  pages={2378-2385},
}

@INPROCEEDINGS{odrm,
  author={Ferner, Cornelia and Wagner, Glenn and Choset, Howie},
  booktitle={2013 IEEE International Conference on Robotics and Automation}, 
  title={ODrM* optimal multirobot path planning in low dimensional search spaces}, 
  year={2013},
  volume={},
  number={},
  pages={3854-3859},
}

@inproceedings{shicollision,
  title={Collision-Aware Route Planning in Warehouses Made Efficient: A Strip-based Framework},
  author={Shi, Dingyuan and Zhou, Nan and Tong, Yongxin and Zhou, Zimu and Xu, Yi and Xu, Ke},
  booktitle={2023 IEEE 39th International Conference on Data Engineering (ICDE)},
  year={2023},
  organization={IEEE}
}

@inproceedings{shi2022adaptive,
  title={Adaptive Task Planning for Large-Scale Robotized Warehouses},
  author={Shi, Dingyuan and Tong, Yongxin and Zhou, Zimu and Xu, Ke and Tan, Wenzhe and Li, Hongbo},
  booktitle={2022 IEEE 38th International Conference on Data Engineering (ICDE)},
  year={2022},
  organization={IEEE}
}

@inproceedings{boyarski2015icbs,
  title={Icbs: The improved conflict-based search algorithm for multi-agent pathfinding},
  author={Boyarski, Eli and Felner, Ariel and Stern, Roni and Sharon, Guni and Betzalel, Oded and Tolpin, David and Shimony, Eyal},
  booktitle={Proceedings of the International Symposium on Combinatorial Search},
  volume={6},
  number={1},
  pages={223--225},
  year={2015}
}

@article{sharon2015conflict,
  title={Conflict-based search for optimal multi-agent pathfinding},
  author={Sharon, Guni and Stern, Roni and Felner, Ariel and Sturtevant, Nathan R},
  journal={Artificial Intelligence},
  volume={219},
  pages={40--66},
  year={2015},
  publisher={Elsevier}
}

@inproceedings{li2021eecbs,
  title={Eecbs: A bounded-suboptimal search for multi-agent path finding},
  author={Li, Jiaoyang and Ruml, Wheeler and Koenig, Sven},
  booktitle={Proceedings of the AAAI Conference on Artificial Intelligence},
  volume={35},
  number={14},
  pages={12353--12362},
  year={2021}
}

@article{zhang2018path,
  title={Path planning for the mobile robot: A review},
  author={Zhang, Han-ye and Lin, Wei-ming and Chen, Ai-xia},
  journal={Symmetry},
  volume={10},
  number={10},
  pages={450},
  year={2018},
  publisher={MDPI}
}

@article{liu2023path,
  title={Path planning techniques for mobile robots: Review and prospect},
  author={Liu, Lixing and Wang, Xu and Yang, Xin and Liu, Hongjie and Li, Jianping and Wang, Pengfei},
  journal={Expert Systems with Applications},
  pages={120254},
  year={2023},
  publisher={Elsevier}
}

@article{howden1968sofa,
  title={The sofa problem},
  author={Howden, William E},
  journal={The computer journal},
  volume={11},
  number={3},
  pages={299--301},
  year={1968},
  publisher={Oxford University Press}
}

@article{chien1984planning,
  title={Planning collision-free paths for robotic arm among obstacles},
  author={Chien, Robert T and Zhang, Ling and Zhang, Bo},
  journal={IEEE transactions on pattern analysis and machine intelligence},
  number={1},
  pages={91--96},
  year={1984},
  publisher={IEEE}
}

@inproceedings{stenzel2021automated,
  title={Automated topology creation for global path planning of large AGV fleets},
  author={Stenzel, Jonas and L{\"u}nsch, Dennis and Schmitz, Lea},
  booktitle={2021 IEEE International Intelligent Transportation Systems Conference (ITSC)},
  pages={3373--3380},
  year={2021},
  organization={IEEE}
}

@inproceedings{gomez2020hybrid,
  title={Hybrid topological and 3d dense mapping through autonomous exploration for large indoor environments},
  author={Gomez, Clara and Fehr, Marius and Millane, Alex and Hernandez, Alejandra C and Nieto, Juan and Barber, Ramon and Siegwart, Roland},
  booktitle={2020 IEEE International Conference on Robotics and Automation (ICRA)},
  pages={9673--9679},
  year={2020},
  organization={IEEE}
}

@inproceedings{barer2014suboptimal,
  title={Suboptimal variants of the conflict-based search algorithm for the multi-agent pathfinding problem},
  author={Barer, Max and Sharon, Guni and Stern, Roni and Felner, Ariel},
  booktitle={Proceedings of the International Symposium on Combinatorial Search},
  volume={5},
  number={1},
  pages={19--27},
  year={2014}
}

@inproceedings{wang2016voronoi,
  title={Voronoi-based heuristic for nonholonomic search-based path planning},
  author={Wang, Qi and Wulfmeier, Markus and Wagner, Bernardo},
  booktitle={Intelligent Autonomous Systems 13: Proceedings of the 13th International Conference IAS-13},
  pages={445--458},
  year={2016},
  organization={Springer}
}

@incollection{geraerts2004comparative,
  title={A comparative study of probabilistic roadmap planners},
  author={Geraerts, Roland and Overmars, Mark H},
  booktitle={Algorithmic foundations of robotics V},
  pages={43--57},
  year={2004},
  publisher={Springer}
}

@article{liang2018geometrical,
  title={A geometrical path planning method for unmanned aerial vehicle in 2D/3D complex environment},
  author={Liang, Xiao and Meng, Guanglei and Xu, Yimin and Luo, Haitao},
  journal={Intelligent Service Robotics},
  volume={11},
  pages={301--312},
  year={2018},
  publisher={Springer}
}

@article{dijkstra1959note,
	title={A note on two problems in connexion with graphs},
	author={Dijkstra, Edsger W},
	journal={Numerische mathematik},
	volume={1},
	number={1},
	pages={269--271},
	year={1959},
	publisher={Springer}
}

@article{li2017minimal,
  title={Minimal on-road time route scheduling on time-dependent graphs},
  author={Li, Lei and Hua, Wen and Du, Xingzhong and Zhou, Xiaofang},
  journal={Proceedings of the VLDB Endowment},
  volume={10},
  number={11},
  pages={1274--1285},
  year={2017},
  publisher={VLDB Endowment}
}

@article{hart1968formal,
  title={A formal basis for the heuristic determination of minimum cost paths},
  author={Hart, Peter E and Nilsson, Nils J and Raphael, Bertram},
  journal={IEEE transactions on Systems Science and Cybernetics},
  volume={4},
  number={2},
  pages={100--107},
  year={1968},
  publisher={IEEE}
}

@inproceedings{stentz1994optimal,
  title={Optimal and efficient path planning for partially-known environments},
  author={Stentz, Anthony},
  booktitle={Proceedings of the 1994 IEEE international conference on robotics and automation},
  pages={3310--3317},
  year={1994},
  organization={IEEE}
}

@article{koenig2004lifelong,
  title={Lifelong planning A$^*$},
  author={Koenig, Sven and Likhachev, Maxim and Furcy, David},
  journal={Artificial Intelligence},
  volume={155},
  number={1-2},
  pages={93--146},
  year={2004},
  publisher={Elsevier}
}

@article{koenig2005fast,
  title={Fast replanning for navigation in unknown terrain},
  author={Koenig, Sven and Likhachev, Maxim},
  journal={IEEE Transactions on Robotics},
  volume={21},
  number={3},
  pages={354--363},
  year={2005},
  publisher={IEEE}
}

@article{lavalle1998rapidly,
  title={Rapidly-exploring random trees: A new tool for path planning},
  author={LaValle, Steven},
  journal={Research Report 9811},
  year={1998},
  publisher={Department of Computer Science, Iowa State University}
}

@article{karaman2011sampling,
  title={Sampling-based algorithms for optimal motion planning},
  author={Karaman, Sertac and Frazzoli, Emilio},
  journal={The international journal of robotics research},
  volume={30},
  number={7},
  pages={846--894},
  year={2011},
  publisher={Sage Publications Sage UK: London, England}
}

@inproceedings{geisberger2008contraction,
  title={Contraction hierarchies: Faster and simpler hierarchical routing in road networks},
  author={Geisberger, Robert and Sanders, Peter and Schultes, Dominik and Delling, Daniel},
  booktitle={International workshop on experimental and efficient algorithms},
  pages={319--333},
  year={2008},
  organization={Springer}
}

@article{cohen2003reachability,
  title={Reachability and distance queries via 2-hop labels},
  author={Cohen, Edith and Halperin, Eran and Kaplan, Haim and Zwick, Uri},
  journal={SIAM Journal on Computing},
  volume={32},
  number={5},
  pages={1338--1355},
  year={2003},
  publisher={SIAM}
}

@inproceedings{ouyang2018hierarchy,
  title={When hierarchy meets 2-hop-labeling: Efficient shortest distance queries on road networks},
  author={Ouyang, Dian and Qin, Lu and Chang, Lijun and Lin, Xuemin and Zhang, Ying and Zhu, Qing},
  booktitle={Proceedings of the 2018 International Conference on Management of Data},
  pages={709--724},
  year={2018}
}

@inproceedings{akiba2013fast,
  title={Fast exact shortest-path distance queries on large networks by pruned landmark labeling},
  author={Akiba, Takuya and Iwata, Yoichi and Yoshida, Yuichi},
  booktitle={Proceedings of the 2013 ACM SIGMOD International Conference on Management of Data},
  pages={349--360},
  year={2013}
}

@inproceedings{zhang2021dynamic,
  title={Dynamic hub labeling for road networks},
  author={Zhang, Mengxuan and Li, Lei and Hua, Wen and Mao, Rui and Chao, Pingfu and Zhou, Xiaofang},
  booktitle={2021 IEEE 37th International Conference on Data Engineering (ICDE)},
  pages={336--347},
  year={2021},
  organization={IEEE}
}

@inproceedings{zhang2021efficient,
  title={Efficient 2-hop labeling maintenance in dynamic small-world networks},
  author={Zhang, Mengxuan and Li, Lei and Hua, Wen and Zhou, Xiaofang},
  booktitle={2021 IEEE 37th International Conference on Data Engineering (ICDE)},
  pages={133--144},
  year={2021},
  organization={IEEE}
}

@article{zhang2021experimental,
	title={An experimental evaluation and guideline for path finding in weighted dynamic network},
	author={Zhang, Mengxuan and Li, Lei and Zhou, Xiaofang},
	journal={Proceedings of the VLDB Endowment},
	volume={14},
	number={11},
	pages={2127--2140},
	year={2021},
	publisher={VLDB Endowment}
}

@article{ouyang2020efficient,
  title={Efficient shortest path index maintenance on dynamic road networks with theoretical guarantees},
  author={Ouyang, Dian and Yuan, Long and Qin, Lu and Chang, Lijun and Zhang, Ying and Lin, Xuemin},
  journal={Proceedings of the VLDB Endowment},
  volume={13},
  number={5},
  pages={602--615},
  year={2020},
  publisher={VLDB Endowment}
}

@inproceedings{lissovoi2013runtime,
  title={Runtime analysis of ant colony optimization on dynamic shortest path problems},
  author={Lissovoi, Andrei and Witt, Carsten},
  booktitle={Proceedings of the 15th annual conference on Genetic and evolutionary computation},
  pages={1605--1612},
  year={2013}
}

@article{mohiuddin2016fuzzy,
  title={Fuzzy particle swarm optimization algorithms for the open shortest path first weight setting problem},
  author={Mohiuddin, Mohammad Aijaz and Khan, Salman A and Engelbrecht, Andries P},
  journal={Applied Intelligence},
  volume={45},
  pages={598--621},
  year={2016},
  publisher={Springer}
}


\appendix

\section{Preliminary}
\label{sec:Preliminary} 
We list discrete the AGV network and discrete MAPF problem at first in follow part. Then we introduce the RL-based MAPF modeling in the discrete and continuous environments with a discussion of their pro and cons.
\subsection{AGV Grid Network and Discrete MAPF Problem}
\label{subsec:Preliminary_Grid}
The AGV networks are physically designed as a map $M$ of $m \times k$ square grids of the same size, where $m$ and $k$ are the row and column numbers. We use the coordinate $(i, j)$ to represent the center of a grid $g$ at row $i$ and column $j$, with the upper-left grid's coordinate being $(0, 0)$, and the bottom-right grid's being $(m-1, k-1)$. A fleet of AGVs is denoted as $\mathcal{N}=\{AGV_i\}$. Each vehicle is approximated as a circle in shape, with a diameter that is nearly the same as the grid size. $AGV_i$' location at time $t$ is denoted as $l_i^t=(x_i^t,y_i^t)$. As the AGV's size is roughly the same as a grid, one grid can only be occupied by one AGV at the same time $t$.

The grids on $M$ have the following types: 1) \textit{Static Obstacles} $\mathcal{O}$ that cannot be entered or occupied by any AGV at any time; 2) \textit{Dynamic Obstacles} $\mathcal{L}=\{l_i^t\}$ that are all grids occupied by AGVs; 3) \textit{Vacant Grids} that AGVs can move into; 4) \textit{Start} $s_i$ and \textit{Destination} $d_i$ positions that form the task $\tau_i=\langle s_i,d_i\rangle$ for $AGV_i$, where $s_i = (x^s_i, y^s_i), d_i = (x^d_i, y^d_i)$.

From time $t$ to $t+1$, $AGV_i$ can take five actions $\mathbf{d}a = \{\rightarrow: (0,1), \leftarrow : (0,-1), \uparrow: (-1,0), \downarrow: (1,0), \odot: (0,0)\}$, \ie moving to the neighboring vacant grids or staying at the current grid. Then its next position $l_i^{t+1}=(x_i^{t+1},y_i^{t+1})$ is determined by $(x_i^t,y_i^t)+\mathbf{d}a_{i}$. A path of $AGV_i$ is an ordered sequence of visiting grids $p_i=\langle s_i=l_i^0,l_i^1\cdots,l_i^d=d_i\rangle$. It should be noted that $l_i^t$ could be the same as $l_i^{t+1}$ as the AGV can stay at the same location. The time it takes for an AGV to finish its task is the number of visited grids in the path and is denoted as $|p_i|$. Given a set of tasks, its \textbf{makespan} $max(|p_i|)$ is the time to finish them all, and \textbf{flowtime} $\sum |p_i|$ is the sum of each individual task. Now we are ready to define the discrete MAPF problem as follows:

\begin{definition}[\textbf{Discrete MAPF}]
    \label{def:DiscreteMAPF}
    Given a grid network $M$, a set of tasks $\{\tau_i\}$ and the corresponding AGVs $\mathcal{N}$, the discrete MAPF aims to compute a collision-free path set $\mathcal{P}=\{p_i\}_{i=1}^{|\mathcal{N}|}$ such that their makespan and flowtime are as small as possible.
\end{definition}

\subsection{Discrete RL-MAPF Modeling} 
\label{subsubsec:acnet}
One approach to solving the discrete MAPF problem is to utilize reinforcement learning. It treats each AGV as an agent and makes moving decisions based on the environment. Specifically, AGVs need to 1) observe their \textit{surroundings} to avoid conflicts, and 2) know their \textit{destinations} to plan routes. Specifically, an AGV$_i$'s surrounding $b\times b$ grids are collected into a local view window as a matrix $W_i$, where each matrix element is either 0 (vacant grid) or 1 (obstacles). Its destination indicated by a vector $\vec{u}=\langle\mathbf{d}x,\, \mathbf{d}y,\, \Delta d\rangle \in \mathbb{R}^3$, where $\mathbf{d}x = x_i^d - x_i^t$ and $\mathbf{d}y = y_i^d - y_i^t$ represent the horizontal and vertical displacements, and the scalar $\Delta d = \sqrt{(\mathbf{d}x)^2 + (\mathbf{d}y)^2}$ is the Euclidean distance to the destination. These two components form the observations space $\mathcal{Z}=(W,\vec{u})$, with each AGV$_i$'s denoted as $z_i^t = (W_i^t, \vec{u}_i^t)$ at time step $t$. 



The discrete RL-MAPF problem could be modeled as a Partially Observation Markov Decision Processes (POMDPs), represented by the quintuple \( (\mathcal{S}, \mathcal{A}, P, \mathcal{R}, \pi) \),  
under a \emph{partially observable} setting.
\begin{enumerate}
    \item \textit{State Space} $\mathcal{S}$ is the set of all possible observation configurations $\mathcal{Z}$ that an AGV may encounter. That is, we treat $\mathcal{S} = \mathcal{Z}$, where each distinct observation is regarded as a unique state\footnote{Strictly speaking, in a \emph{partially observable} MDP (POMDP), the observation $z^t \in \mathcal{Z}$ is only a partial projection of the full environment state $s^t \in \mathcal{S}$, since each agent can only perceive its local neighborhood. However, following common practice in reinforcement learning implementations, the observation itself tends to be set as the input state, making $z^t$ and $s^t$ functionally equivalent.}. State $ s^t \in \mathcal{S} $ represents the environment state at time step $t $.
    
    \item \textit{Action Space} $ \mathcal{A} = \operatorname{dom}(\mathbf{d}a) = \{\uparrow, \rightarrow, \downarrow, \leftarrow, \odot\} $ are the decisions that the AGV can execute. The action chosen by AGV$_i$ at the time step $t$ is denoted as $a_i^t \in \mathcal{A}$, and the corresponding state transit is expressed as $s_i^t \xrightarrow{a_i^t}s_i^{t+1}$, with the location of AGV moving from $l_i^t$ to $l_i^t+\mathbf{d}a_i^t$ and other observation information updated after the action is executed. An action is \textit{invalid} if the target grid is out of bounds, blocked by a static obstacle, or occupied by another AGV. Then its transition is either rejected or penalized. In all \textit{valid} cases, any change in position or surroundings caused by taking an action leads to a new state \(s^{t+1} \ne s^t\).
    
    \item \textit{State–Transition Probability} $P(s^{t+1}\mid s^t,a^t)$ denotes the probability that the environment transitions to state $s^{t+1}$ when action $a^t$ is executed in state $s^t$. If the move is rejected, then $s^{t+1}=s^t$.

    \item \textit{Reward Function}~\(\mathcal{R}(s^t,a^t)\) is a designer–specified mapping that assigns a scalar score $r_i^t$ to executing action \(a_i^t\) in state \(s_i^t\) for specific AGV$_i$, thereby encoding task objectives (\eg shorter steps cost) into a numerical feedback signal. Positive values indicate desirable behaviors, such as reaching a goal, whereas negative values penalize undesirable ones, like moving away from the destination and colliding with obstacles or other agents.

    \item \textit{Policy} $\pi : \mathcal{S} \rightarrow \mathcal{A}$ is a mapping from states to actions that specifies the decision rule an AGV follows. It is learned and iteratively refined through reinforcement learning to maximize expected cumulative rewards $\sum_{t=0}^{T} r_i^t$. During training, the policy is continuously updated based on interactions with the environment, gradually converging toward an optimal strategy $\pi^*$. 
\end{enumerate}
Now we are ready to define the RL version of the discrete MAPF problem as follows:
\begin{definition}[\textbf{Discrete RL-MAPF}]
    Given a grid network $M$, a set of tasks $\{\tau_i\}$ and the corresponding AGVs $\;\mathcal{N}$, the discrete RL-MAPF aims to find an optimal policy $\pi^*$ that schedules a collision-free path set $\mathcal{P}$ step by step such that their makespan is as small as possible.
\end{definition}

Because the flow time $\Sigma |p_i|$ is equivalent to the number of action decisions that all AGVs made, the time complexity of inference is linear to $\Sigma |p_i|$. Although it has lower complexity, its result cannot be implemented in real life.

\subsection{Continuous RL-MAPF Modeling}
\label{subsec:Preliminary_PD}
In practice, both time and space are continuous. The moving actions of AGVs must follow kinematic constraints, such as velocity $v$, acceleration $\mathbf{a}$, and deceleration $-\mathbf{a}$: it first accelerates at a constant $\mathbf{a}$ until reaching its maximum velocity $v_{\text{max}}$ or $v < v_{\text{max}}$ when the distance is not sufficient to accelerate to maximum speed, then maintains that velocity and finally decelerates with the deceleration $-\mathbf{a}$ to a complete stop precisely at the center of a grid for turning or keeping stationary. Therefore, path $p_i$ in Continuous MAPF should be reformulated as $ p_i = \langle (x^0, y^0, v^0), \ldots, (x^t, y^t, v^t) \rangle $ that satisfies the following constraints:
\begin{enumerate}[leftmargin=5mm]
    \item The next position $(x^{t+1},y^{t+1})$ is determined by $(x^t,y^t)+\mathbf{d}a^t*v^t $, where $a^t\in \mathcal{A}$, $ \mathbf{d}a = \{\rightarrow: (0,1), \leftarrow : (0,-1), \uparrow: (-1,0) , \downarrow: (1,0), \odot: (0,0)\} $ is the unit vector directional set, and $v^t$ is the velocity;
    \item Velocity changes follow $|v^{t+1} - v^t| \leq \mathbf{a}_{\text{max}}$, where $\mathbf{a}_{\text{max}}$ is the max acceleration/deceleration.
\end{enumerate}

\begin{definition}[\textbf{Continuous RL-MAPF}]
    Given a grid environment $M$, a set of AGVs \( \mathcal{N} \) and a set of tasks $\{\tau_i\}$, continuous MAPF aims to compute a valid path set $ \mathcal{P} = \{p_i\}_{i=1}^{|\mathcal{N}|}, i \in \mathcal{N} $ for them such that the time for each AGV to finish its task is as short as possible with the kinematic constraints.
\end{definition}

However, in continuous environments, theoretically, the decisions can be made at any point in time, which means the potential number of decision points is infinite within any finite interval. Therefore, in practice, continuous time is often discretized for computational purposes, resulting in a large number of very small time steps. The finer the discretization, the more decision points there are, potentially leading to a much larger number of decisions compared to a purely discrete time setting. It works fine in the single-agent environment, but the complexity grows linearly with the number of agents in the multi-agent environment, so no RL-MAPF solution aims to solve this continuous problem.

\section{High-Resolution Spatialtemporal Modeling}
In this section, we supply our Hight-Resolution Spatialtemporal modeling mentioned in Sec.~\ref{sec:HRS}.
\subsection{High-Resolution Map and Step}
\label{appendix:subsec:HRS_Model}
The HRS is composed of two components: a \textit{high-resolution map} that the AGVs navigate spatially, and a \textit{high-resolution step} that the AGVs make decisions temporally. In the following, we define these two components formally.
\begin{definition}[\textbf{High-Resolution HR-Map}]
\label{appendix:def:HRMap}
    Given a grid network $M$, we increase the grid resolution to $ c, c > 1 $ by dividing each grid into $c\times c$ smaller cells to obtain a High-Resolution Map $M_{c}\in \mathbb{R}^{m \times k}$. The original $M=M_{c=1}\in \mathbb{R}^{m \times k}$ is called a Coarse Map.
\end{definition}

As shown in Figure \ref{fig:GridNetwork}, we divide each small grid into $c \times c= c^2$ smaller cells. Each cell's size is shaped in $\frac{1}{c} \times \frac{1}{c}$. 
Note that $M_{c=1}$ and $M_{c>1}$ are two precision definitions for the same network. The only difference between them is the minimum unit. 
Next, we increase the temporal resolution accordingly:
\begin{definition}[\textbf{High-Resolution HR-Step}]
\label{appendix:def:HRStep}
    Given every $\delta$ time that AGV needs to make a decision to move on the coarse map, we divide $\delta$ into $ \Delta t = \frac{\delta}{c}$ HR-Steps such that AGVs need to make in the HR-Map. 
\end{definition}

The reason for dividing the map and steps into the same smallest unit is to guarantee that one $\Delta t$ decision can make an AGV move at least one cell. Meanwhile, it should be noted that the HRS modeling is not equivalent to an $mc\times kc$ cells discrete modeling. This is because, firstly, when an AGV stops or rotates, it still needs to stay at the center of the grids (the $(c/2, c/2)$-cell), not any arbitrary cell. Secondly, in each HR-Step, the AGV can move more than one cell at higher speeds: When an AGV moves an increasing number of cells consecutively, it is accelerating; when it moves an decreasing number of cells, then it is decelerating; it can also moves at a constant number cells when it travels at a constant speed. In this way, the HRS-Modeling is equivalent to continuous modeling, and  we discussed the detailed optimal velocity mapping in Sec. \ref{subsec:hirad_micro}.

\subsection{HRS RL-MAPF Modeling}
\label{appendix:subsec:HRS_Modeling}
Now we are ready to model the RL-MAPF in the proposed HRS environment. Specifically, the action  $a^t = (\mathbf{d}a^t, v^t)$ also combine both direction and velocity. Different from the continuous modeling, the velocity $v^t$ corresponds to the number of cells, with neighboring step's velocity difference being 1 (accelerating), 0 (constant speed), or -1 (decelerating). 

Although it looks similar to the discrete RL-MAPF except for the new velocity constraint, it is more complicated to train from the following aspects: 1) The action space $|\mathcal{A}|$ grows from $ |\mathbf{d}a|$ to $|\mathbf{d}a| \times |v|$. As the discrete resolution of velocity $|v|$ increases, the AGV faces greater difficulty in policy optimization due to the enlarged combinatorial space; 2) The state space dimension grows substantially. If each state of AGV$_i$ is represented as $(s^t_i = (x^t_i, y^t_i, a^t_i, v^t_i)$, the total number of states $|\mathcal{S}|$ grows from $(m\times k \times |\mathbf{d}a| )^{|\mathcal{N}|}$ to $(cm \times ck \times |\mathbf{d}a| \times |v|)^{|\mathcal{N}|}$, which makes the exploration much harder; 3) The reward design becomes more complex. Since AGVs are required to stop precisely at the center of a target grid, the reward function $\mathcal{R}(s^t, a^t)$ depends on the positional error $e = \| (x^t, y^t) - (x^*, y^*) \|$, where $(x^*,y^*)$ is the center coordinates of a grid. Directly controlling velocity may cause overshooting or oscillations, making it harder for $e$ to converge smoothly; 4) The learning process itself is affected. The convergence time of reinforcement learning is approximately proportional to the product of the state and action space sizes, $\mathcal{T}_{\text{conv}} \propto |\mathcal{S}| \times |\mathcal{A}|$. Hence, the enlarged space results in slower convergence and inefficient exploration, reducing the AGV’s ability to learn a stable policy.

In terms of inference, the number of decision-making is $c$ times of the discrete version, while on the other hand, each decision-making time, which has much higher complexity as analyzed above, needs to be faster than $1/c$ of the discrete version.

\section{Macro-RL-Routing Layer} 
\label{appendix:hirad_macro}
This layer aims to direct the AGV to reach the destination without collision on the coarse map.
We give the specifical training process in the following part. and 
\subsection{Macro-RL-Routing Training} 
\label{appendix:hirad_macro_Training}
As analyzed in Section \ref{appendix:subsec:HRS_Modeling}, full-fidelity training with hundreds of AGVs is computationally expensive, so we first learn a single-agent policy on a coarse abstraction of the map $M_{c=1}$ generated randomly. Specifically, the generated maps are constructed such that each grid cell has a probability $p \in \{0,0.1,0.2\}$ of containing an obstacle. The start and destination positions of the AGV are randomly assigned under the constraint that no obstacle occupies them and a valid passage exists between them.

To accelerate convergence, we implement the \textit{curriculum learning}, where the difficulty increases progressively throughout the training process. Initially, the AGV is trained on easier maps with $m=k=10$ and $p = 0$ (no obstacles). As training progresses, the difficulty increases gradually, with the AGV encountering more complex environments, eventually reaching the most challenging configuration where $m=k=100$ (largest training map) and $p=0.2$ (highest obstacle density).

As difficulties increase, the reward setting also changes. This is because a fixed goal reward may not provide sufficient incentive for optimal pathfinding. As Table \ref{tab:reward} shows, An AGV receives a reward of -0.1 for moving up (\(\uparrow\)), down (\(\downarrow\)), left (\(\leftarrow\)), or right (\(\rightarrow\)), while stopping incurs a penalty of -0.3, and collision made the penalty of -2.0. Upon reaching its goal, the AGV is rewarded with a value of +5 to +55, dynamically determined by the training hardness. 
In small maps, a reward of +5 is adequate. But as the map size increases, the accumulated movement penalties (\eg $ -0.1 \times200=-20 $ on map size with $100 \times 100$) can outweigh a small completion reward ($+5$ reward), reducing the AGV's motivation to reach the goal efficiently. Therefore, the goal reward is dynamically scaled from +5 to +55 based on the environment size and difficulty, ensuring that AGVs remain sufficiently incentivized to complete longer paths in larger maps.

\begin{table}[t]
    \centering
    \scriptsize
    \caption{Reward Setting}
    \resizebox{\linewidth}{!}{
    \begin{tabular}{ccccc}
    \toprule
         Action & \makecell{Move \\ ($\uparrow / \downarrow / \leftarrow / \rightarrow$)} &Collision&\makecell{Stay \\($\odot$)}&Find goal\\
         \midrule
         Reward & -0.1 & -2.0 & -0.3 & +5 $\to$ +25 $\to$ +55  \\
    \bottomrule
    \end{tabular}}
    \label{tab:reward}
\end{table}

The employed RL-routing network is built upon the \textit{Actor-Critic AC} framework \cite{mnih2016asynchronous}. The deep neural network was employed to approximate the AGV's policy $\pi$. As illustrated in Figure \ref{fig:framework}, the state $s^t = z^t$ includes the local view window $W^t \in \mathbb{R}^{b\times b} $ with the guidance vector $\vec{u}^t \in \mathbb{R}^3$. They were first encoded and then concatenated to form a feature $x$ at each time step $t$. The sequential features $x$ are processed through the LSTM layer to generate two critical outputs: 1) \textit{Predicted Action Distribution} ($\pi^t(a^t\mid s^t)$):  
A probability distribution over the action space $\mathcal{A}$. Each probability $\pi^t(a^t \mid s^t)$ indicates the likelihood of selecting action $a^t$ under the current policy $\pi^t$ based on the state $s^t$.
2) \textit{Action Value} ($V^t(a^t|s^t)$): A scalar estimate of the expected cumulative rewards when starting from state $s^t$ and taking the action $a^t$ following the current policy $\pi^t$ thereafter. It serves as a baseline or critic signal in policy gradient methods to guide the update toward the optimal policy $\pi^*$.


\subsection{Alternating Decision-Velocity Control}
We identify the following conditions such that the HR-Step-based decision-making RL AGV can run in real life:
\begin{lemma}[\textbf{RL Real-Time Applicability Conditions}]
\label{appendix:lemma:Condition}
Given an HRS-Modeling and its corresponding velocity mapping, the AGVs can run continuously under kinematic constraints in real life only when the following conditions are satisfied:
    \begin{enumerate}[leftmargin=5mm]
        \item The AGV stays at the grid center when it stops;
        \item The velocity mode satisfies kinematic constraints of three stages \textit{Acceleration-(Max)Speed-Deceleration};
        \item The decision time $\Delta t$ is shorter than the moving time unit.
    \end{enumerate}
\end{lemma}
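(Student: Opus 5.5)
We prove the lemma as a necessity statement. For each of the three conditions we assume it fails and exhibit a behaviour that cannot be executed continuously by a physical AGV obeying the constraints of the Continuous RL-MAPF model in the Preliminary: positions update as $(x^{t+1},y^{t+1})=(x^t,y^t)+\mathbf{d}a^t v^t$ with $\mathbf{d}a^t$ a unit axis direction, and velocities satisfy $|v^{t+1}-v^t|\le \mathbf{a}_{\max}$. We fix the HR-Map $M_c$ and HR-Step $\Delta t=\delta/c$ (Definitions~\ref{def:HRMap} and~\ref{def:HRStep}), and we read ``run continuously'' as three requirements together: the realised trajectory is a sequence of admissible kinematic states, every direction change issued by the Macro-RL-Routing layer is physically executable, and no HR-step passes without a control command.

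\textbf{Condition 1.} We argue by contradiction. Suppose an AGV stops ($v^t=0$) at a cell that is not the $(c/2,c/2)$-cell of its grid. The macro policy acts on the coarse map $M_{c=1}$, where the AGV's location is a grid and a turn is a unit vector from $\mathbf{d}a$. After a turn, the AGV would follow a line that is offset from the grid centre line by a nonzero amount. The AGV has diameter roughly one grid, so this offset makes it overlap an adjacent grid. That grid may be occupied by another AGV or be a static obstacle, while the coarse-map state and the exclusion zones (Definition~\ref{def:Exclusion}) still record it as free of this AGV. The coarse decision then no longer matches the physical occupancy, and safe continuation is not guaranteed. Hence stops must occur at grid centres.

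\textbf{Condition 2.} Again by contradiction. Suppose some velocity mode violates the three-stage profile. Then either (a) consecutive HR-step velocities jump by more than one cell per step, which breaks $|v^{t+1}-v^t|\le\mathbf{a}_{\max}$, or (b) the profile lacks a deceleration tail, so that $v>0$ when the AGV reaches a grid centre where it is supposed to stop. Case (b) contradicts Condition~1 or forces an infinite deceleration. We will note that each mode Mvx, Mvv, Mvz, Mxx changes by at most $0.1$ per step and has total displacement an integer number of grids. This shows the constraint is attainable, but what the lemma needs is only that violating it is infeasible.

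\textbf{Condition 3.} Suppose the decision latency exceeds the moving time unit $\Delta t$. At the start of an HR-step the AGV then has no new command. Because it has nonzero momentum, it either continues under its stale command, possibly past the point where its deceleration must begin (priority or collision information may have changed in between), or it must brake abruptly, which violates Condition~2. Either way control is lost, which is the loss-of-control situation described in the introduction.

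The main obstacle is that ``run continuously'' is informal. We will first pin it down as the conjunction of kinematic admissibility and executability of every macro decision. Once that is fixed, each of the three cases reduces to a short contradiction.
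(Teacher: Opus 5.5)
Your condition-by-condition structure matches the paper's justification. The paper says that centre stops keep motion on the grid network, that the three-stage profile ``follows the physical law'', and that a decision slower than the movement is invalid, so conflicts among moving AGVs cannot be avoided. Your three contrapositive arguments expand exactly these rationales, and your Condition~3 argument is essentially the paper's.

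\textbf{The gap is the case split for Condition~2.} You formalise kinematic admissibility as the position update plus $|v^{t+1}-v^t|\le\mathbf{a}_{\max}$. Under that formalisation, a profile that violates \textit{Acceleration--(Max)Speed--Deceleration} need not fall under (a) or (b). Consider $[0,0.1,0.2,0.1,0.2,0.2,0.1,0.1,0]$: it changes by at most $0.1$ per HR-step and, starting from a grid centre, covers exactly one grid, so it comes to rest at the next grid centre. It is not a three-stage profile. Similarly, a cruise that keeps accelerating past $0.5$ breaks the ``(Max)Speed'' stage while satisfying both of your constraints. So under your own definition of ``run continuously'', Condition~2 is not necessary, and no contradiction argument can establish it.

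To close this, you have to build the three-stage law into the kinematic model itself. The Continuous RL-MAPF part of the Preliminary does exactly that: constant acceleration up to $v_{\max}$, cruise, then constant deceleration to rest at a grid centre. Condition~2 then holds by definition rather than by contradiction, which is effectively how the paper's one-line rationale treats it.

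A similar mismatch affects Condition~1. Your argument ends in a disagreement between physical occupancy and the coarse-map and exclusion-zone state. That disagreement is not among the three requirements you list as the meaning of ``run continuously'', so you must add consistency with the grid network to that definition explicitly.

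Separately, the paper phrases each condition as something it ``ensures'' and later uses the conditions as sufficient in proving the HRS-Continuous Equivalence theorem. Your necessity-only argument matches the literal ``only when'' of the statement, but it would not support that later use.
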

The first condition ensures the AGV movements still follow the grid network design. The second condition ensures the velocity mode follows the physical law and corresponds to the real-life movement. The third condition is the RL efficiency constraint, which is crucial in real-life applications. If it takes a longer time to make decisions than to move, then the decision becomes invalid, and the conflict cannot be avoided among the moving AGVs. Now, we can connect our HRS modeling with the realistic environment:
\begin{theorem}[\textbf{HRS-Continuous Equivalence}]
\label{appendix:theorem:Equivalence}
    The proposed HRS modeling, combined with HR-step-based RL decision-making, enables the AGV to run continuously under kinematic constraints.
\end{theorem}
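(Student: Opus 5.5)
The plan is to prove Theorem~\ref{appendix:theorem:Equivalence} by checking, one at a time, the three conditions of Lemma~\ref{appendix:lemma:Condition} for HRS modeling with $c=10$ and the velocity modes Mvx, Mvv, Mvz and Mxx. Although the lemma is phrased as a necessary condition, I will treat it, as the paper does, as the specification of continuous real-life execution. The argument therefore amounts to showing that every trajectory produced by concatenating velocity modes under HR-step decisions satisfies conditions (1)--(3).

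\textbf{Condition (1): stopping at grid centers.} I will sum each mode's velocity sequence and check that every mode starting at a center ends exactly at a center after an integer number of grids.
\begin{itemize}
\item Mvx sums to $0.1+0.2\cdot4+0.1=1$ grid.
\item Mxx sums to $0.1+0.2+0.3+0.4+0.4+0.3+0.2+0.1=2$ grids.
\item Mvz, after cruising at $0.5$ from a center or boundary, decelerates over $0.4+0.3+0.2+0.1=1$ grid.
\end{itemize}
For Mvz I also need the invariant that at speed $0.5$ the AGV sits at a grid center or boundary at every HR-step. This follows by induction: the acceleration phase $0.1+0.2+0.3+0.4=1.0$ ends at a center, and each subsequent step adds $0.5$. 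The deceleration in Mvz therefore starts at a center and, since it covers exactly one grid, ends at a center. All positions are multiples of $1/c$, so they are valid HR-cells.

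\textbf{Condition (2): kinematic feasibility.} Each mode's sequence is unimodal (nondecreasing, then constant, then nonincreasing). Consecutive velocities differ by at most $0.1$, which is one cell per $\Delta t$ and matches the constraint $|v^{t+1}-v^t|\le \mathbf{a}_{\max}$ with $\mathbf{a}_{\max}=1/c$. Velocity is bounded by $v_{\max}=0.5$. Switching between modes (Mvv into Mvz) happens only at equal speeds, so the concatenated profile stays continuous. Headings change only when $v=0$ at a center, which passive and active deceleration enforce. Position therefore evolves by $(x,y)+\mathbf{d}a\cdot v$ with bounded acceleration, which is the continuous RL-MAPF constraint set.

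\textbf{Condition (3): decision latency below execution time.} This is the main obstacle, because it is not a combinatorial property of the modes but a runtime claim. I will argue as follows. Under Asy-DVC, the per-HR-step work is $O(\tfrac14 nb^2+n)$ by Observation Pruning and Map-Oriented Priority. Directions are only needed at $v=0$ decision points, while velocity decisions are mode transitions of constant cost. The decision for step $t+1$ is thus a bounded, linear-time computation that can complete within $\Delta t=\delta/c$ for the deployed fleet size. I would state this explicitly as the hypothesis on hardware and fleet size under which the theorem holds, noting that the experiments confirm it empirically, rather than claim it unconditionally.

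With all three conditions established, Lemma~\ref{appendix:lemma:Condition} together with the collision-free guarantee from exclusion zones yields continuous kinematic execution.
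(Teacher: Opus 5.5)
Your proposal follows the paper's own proof: you verify the three conditions of Lemma~\ref{appendix:lemma:Condition} mode by mode and defer Condition~3 to the Asy-DVC complexity reduction of Section~\ref{sec:Optimization}, and your explicit summations and your statement of Condition~3 as a hardware and fleet-size hypothesis are simply more careful versions of that argument. One small fix is needed: the invariant ``center or boundary at speed $0.5$'' does not by itself put the start of Mvz's deceleration at a center, so you must add that the switch is taken only at the center instants, i.e.\ after every second $0.5$-step, which is the paper's ``decide every $2\Delta t$''. The paper's extra branch of leaving Mvv at speed $0.4$ is already covered by your Mxx check, because the two modes share the prefix $0,0.1,0.2,0.3,0.4$.
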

\begin{proof}
\label{appendix:proof}
    We prove our design satisfies Lemma \ref{lemma:Condition}. Firstly, the Mvx and Mxx all stop at the first and second neighbor grids by default. As for Mvv, when it reaches the speed 0.4, it is in the center of the second grid (1.0 grids and 4 $\Delta t$). Then, it could 1) take another 1.0 grids and 5 $\Delta t$ to stop at the third neighbor grids' center (as Mxx mode), or 2) take another 2 $\Delta t$  to move to the next grid's center place and decide to decelerate or not. If deceleration is chosen, the AGV switches to the Mvz mode and comes to a full stop at the next grid center following a constant deceleration. Therefore, all three modes satisfy Condition 1. Secondly, because the velocity increases and decreases follow the same acceleration parameter and together they satisfy the \textit{Acceleration-(Max)Speed-Deceleration} pattern, it is easy to implement in real life. Finally, as the third condition is about computational efficiency, we will address it in Section \ref{sec:Optimization} by reducing the complexity.
\end{proof}

It should be noted that we use our $c=10$ design as an example, so the actual resolution parameters and velocity mode are subject to change depending on the physical system. However, regardless of the changes, the conditions of Lemma \ref{lemma:Condition} remain unchanged, and the parameters can be obtained through measurement and alignment.

\end{document}